\newcommand{\oururl}{\url{https://anonymous.4open.science/r/federatedcotraining-B03C/}}
\newcommand{\defemph}[1]{\emph{#1}}
\newcommand{\fedct}{\textsc{FedCT}\xspace}
\newcommand{\aimhi}{\fedct}
\newcommand{\cent}{\textsc{Centralized}\xspace}
\newcommand{\dpfedct}{\textsc{DP-FedCT}\xspace}
\newcommand{\dpaimhi}{\dpfedct}
\newcommand{\fedavg}{\textsc{FedAvg}\xspace}
\newcommand{\dd}{\textsc{DD}\xspace}
\newcommand{\dpfl}{\textsc{DP-FedAvg}\xspace}
\newcommand{\pate}{\textsc{PATE}\xspace}
\newcommand{\dppate}{\textsc{DP-PATE}\xspace}
\newcommand{\feddistill}{\textsc{FedDistill}\xspace}
\newcommand{\fedmd}{\textsc{FedMD}\xspace}
\theoremstyle{plain}
\newtheorem{theorem}{Theorem}
\newtheorem{lemma}[theorem]{Lemma}
\newtheorem{remark}[theorem]{Remark}
\newtheorem{proposition}{Proposition}
\newtheorem{definition}[theorem]{Definition}
\newtheorem{corollary}{Corollary}
\newtheorem{notation}[theorem]{Notation}
\newtheorem{conjecture}[theorem]{Conjecture}
\newtheorem{assumption}[theorem]{Assumption}
\newtheorem{observation}[theorem]{Observation}
\newtheorem{fact}[theorem]{Fact}
\newtheorem{claim}[theorem]{Claim}
\newtheorem{problem}[theorem]{Problem}
\newtheorem{open}[theorem]{Open Problem}
\newtheorem{hypothesis}[theorem]{Hypothesis}
\newtheorem{question}[theorem]{Question}
\newtheorem{case}{Case}
\newtheorem*{proposition*}{Proposition}
\newtheorem*{lemma*}{Lemma}
\newtheorem*{corollary*}{Corollary}
\newcommand{\algo}{\mathcal{A}}
\newcommand{\risk}{\varepsilon}
\newcommand{\model}{h}
\newcommand{\modelspace}{\mathcal{H}}
\newcommand{\aggmodel}{\overline{\model}}
\newcommand{\loss}{\ell}
\DeclareMathOperator{\agg}{agg}
\newcommand{\Prob}[2]{\mathop{{}\mathbb{P}_{#1}} \left ( #2 \right ) }
\newcommand{\Dcal}{\mathcal{D}}
\newcommand{\RR}{\mathbb{R}}
\newcommand{\NN}{\mathbb{N}}
\newcommand{\Xcal}{\mathcal{X}}
\newcommand{\Ycal}{\mathcal{Y}}
\newcommand{\bigo}{\mathcal{O}}
\title{Little is Enough: Boosting Privacy by Sharing Only Hard Labels in Federated Semi-Supervised Learning}
\author{%
  Amr Abourayya \textsuperscript{\rm1,\rm2},
  Jens Kleesiek \textsuperscript{\rm1},
  Kanishka Rao \textsuperscript{\rm5},
  Erman Ayday \textsuperscript{\rm4},
  Bharat Rao \textsuperscript{\rm5},\\
  Geoffrey I. Webb \textsuperscript{\rm3},
  Michael Kamp \textsuperscript{\rm1,\rm2,\rm3,\rm5}
}
\begin{document}

\maketitle

\begin{abstract}
    In many critical applications, sensitive data is inherently distributed and cannot be centralized due to privacy concerns. A wide range of federated learning approaches have been proposed to train models locally at each client without sharing their sensitive data, typically by exchanging model parameters, or probabilistic predictions (soft labels) on a public dataset or a combination of both. However, these methods still disclose private information and restrict local models to those that can be trained using gradient-based methods. We propose a federated co-training (\fedct) approach that improves privacy by sharing only definitive (hard) labels on a public unlabeled dataset. Clients use a consensus of these shared labels as pseudo-labels for local training. This federated co-training approach empirically enhances privacy without compromising model quality. In addition, it allows the use of local models that are not suitable for parameter aggregation in traditional federated learning, such as gradient-boosted decision trees, rule ensembles, and random forests. 
    Furthermore, we observe that \fedct performs effectively in federated fine-tuning of large language models, where its pseudo-labeling mechanism is particularly beneficial.  Empirical evaluations and theoretical analyses suggest its applicability across a range of federated learning scenarios.
\end{abstract}

\setcounter{secnumdepth}{2} 

\section{Introduction}
\label{sec:introduction}

\textit{Can we train models using distributed sensitive datasets while maintaining data privacy?} Federated learning (\fedavg) ~\citep{mcmahan2017communication} addresses this challenge by collaboratively training a joint model without directly disclosing distributed sensitive data, but instead sharing information from locally trained models. Most approaches share model parameters that are aggregated at a server~\citep{kamp2019black, mcdonald2009efficient, kairouz2021advances}, most prominently federated averaging (\fedavg)~\citep{mcmahan2017communication}. This results in high model performance, but allows an attacker or curious observer to make non-trivial inferences about local data from those model parameters~\citep{ma2020safeguarding} or model updates~\citep{zhu2020deep}. Adding tailored noise to the parameters before sharing improves privacy and provides differential privacy guarantees, as in differentially private \fedavg~\cite{wei2020federated} (\dpfl), but reduces model performance~\citep{xiao2022differentially}. An alternative is to use a form of federated semi-supervised learning that shares information via a public unlabeled dataset, which can improve privacy---most commonly soft labels are shared~\citep{gong2022preserving,lin2020ensemble,jiang2020federated,li2019fedmd}, e.g., as in distributed distillation~\citep{bistritz2020distributed} (\dd). \citep{struppek2023careful} showed, however, that data can be reconstructed also from soft labels. 
In Fig.~\ref{fig:privacy_summary}, we compare the empirical privacy vulnerability of these approaches: Although sharing soft labels (\dd) offers a better privacy-utility trade-off than \dpfl and \fedavg, the gap to ideal privacy (VUL$=0.5$, where membership inference attacks are akin to random guessing) remains significant. 

\begin{figure}
    \centering
    \includegraphics[width=0.95\linewidth]{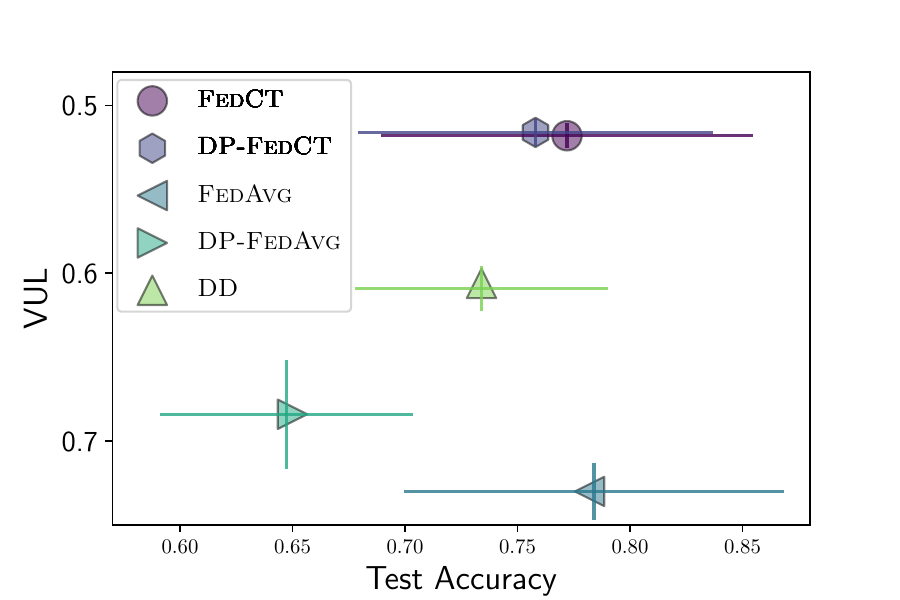}
    \caption{Vulnerability (VUL) to membership inference attacks on the communication of $5$ clients and their test accuracy (avg and std over $5$ datasets). VUL is measured empirically as the success probability of infering membership correctly, $\text{VUL}=0.5$ implies optimal privacy.}
    \label{fig:privacy_summary}
\end{figure}
We propose instead to share definitive class labels (hard labels) to improve privacy: We develop a novel federated co-training approach (\fedct) where clients iteratively share hard labels on an unlabeled public dataset. A server forms a consensus of these predictions, e.g., via 
majority vote~\citep{blum1998combining}, and clients use this consensus as pseudo-labels for the unlabeled dataset in their local training. 

We prove convergence of this approach and empirically show that it provides a favorable privacy-utility trade-off (see Fig.~\ref{fig:privacy_summary}), achieving near-optimal privacy levels. Using a majority vote as consensus mechanism results in a model performance that is at least en par with model parameter and soft-label sharing. In tasks where the initial models already provide good predictions, such as fine-tuning of LLMs, \fedct can even outperform \fedavg. We show how differential privacy guarantees~\citep{ziller2021medical, chaudhuri2019capacity} can be provided for \fedct via a suitable noise mechanism for binary outputs, which not only protect privacy but can improve robustness against poisoning and backdoor attacks~\citep{bagdasaryan2020backdoor, sun2019can}. For that, we provide a novel bound on the sensitivity of hard label sharing for local learning algorithms that are on-average-leave-one-out-stable. Due to the inherent robustness of majority voting~\citep{papernot2016semi}, the added noise in differentially private federated co-training (\dpfedct) does not reduce model performance as much as in \fedavg. As~\citet{bistritz2020distributed} noted, sharing labels can also reduce communication substantially, improving scalability. We confirm this for \fedct, reducing communication over \fedavg by up to two orders of magnitude.

While federated semi-supervised learning performs well on heterogeneous feature distributions, both soft and hard label often underperform on data with heterogeneous label distributions, since local models are unable to provide meaningful predictions for labels they have not observed during training. With mild-to-medium data heterogeneity, label sharing performs well, but its performance drops for pathological distributions. We propose using a qualified majority as a consensus for pathological distributions which improves the performance substantially, making \fedct competitive with \fedavg also for pathological non-iid data.

A positive side-effect of sharing hard labels is that one is no longer limited to models that lend themselves to parameter aggregation as in \fedavg, or gradient-based training as used in soft-label sharing: With \fedct we can train, e.g., interpretable models, such as decision trees~\citep{quinlan1986induction}, XGBoost~\citep{friedman2001greedy, chen2016xgboost}, rule ensembles~\citep{friedman2008predictive}, and Random Forests~\citep{breiman2001random}, as well as mixtures of model types. 

In summary, our contributions are: 
\begin{itemize}
\item  [(i)] An investigation of the privacy-utility trade-off between sharing model parameters, soft labels, and hard labels via a novel federated co-training approach (\fedct);
\item[(ii)] A novel sensitivity bound on sharing hard labels for local learning algorithms that are on-average-replae-one stable;
\item[(iii)] A theoretical analysis of the convergence and sensitivity, and an extensive empirical evaluation of \fedct.
\end{itemize}
\section{Related Work}
\label{sec:related_work}
We briefly discuss closely related work here, focussing on privacy in federated learning and distributed semi-supervised learning and provide a more comprehensive discussion of related work in App.~\ref{app:smifed:discussion}.

\textbf{Privacy in Federated Learning: }
Collaboratively training a model without sharing sensitive data is a key advantage of (horizontal) federated learning~\citep{mcmahan2017communication} which trains local models and aggregates their parameters periodically. Communicating only model parameters, however, does not entirely protect local data: An attacker can make inferences about local data from model parameters~\citep{shokri2017membership, ma2020safeguarding} and model updates~\citep{zhu2020deep}. 
A common defense is perturbing shared information, e.g., applying appropriate clipping and noise before sending model parameters. This can guarantee $\epsilon,\delta$-differential privacy for local data~\citep{wei2020federated} at the cost of model quality. This technique also defends against backdoor and poisoning attacks~\citep{sun2019can}.~\citet{truex2019hybrid} proposes enhancing the privacy of data exchange in traditional distributed algorithms through the use of secure multi-party communication (SMPC) and differential privacy (DP) in applications where the scalability and efficiency limitations of SMPC are irrelevant. 
In these cases, SMPC (as well as homomorphic encryption~\citep{roth2019honeycrisp} and trusted execution environments~\citep{subramanyan2017formal}), offers an additional level of privacy that can be combined with other approaches~\citep[cf. Sec. 4 in ][]{kairouz2021advances}.

\textbf{Distributed semi-supervised learning: }
Semi-supervised learning utilizes both labeled and unlabeled data~\citep{zhou2005tri,rasmus2015semi} for training. In centralized co-training, classifiers are independently trained on distinct feature sets, or views, of labeled data and their consensus on unlabeled data is used as pseudo-labels~\citep{blum1998combining, ullrich2017co}. \citet{papernot2016semi} propose a distributed---but not collaborative---knowledge distillation approach called \pate where teachers are trained distributedly and a consensus of their predictions on the unlabeled data is used to train a student model. We show empirically that \pate and its differentially private 
variant \dppate is outperformed by collaborative approaches.~\citet{bistritz2020distributed} propose to share soft predictions on unlabeled data to reduce communication in federated (and decentralized) deep learning and term their approach distributed distillation (\dd). We compare to \dd, showing that we achieve similar model quality and communication with improved privacy.~\citet{Chen2020FedBEMB} employ knowledge distillation to train a student model based on predictions from a Bayesian model ensemble (FedBE). Similarly, ~\citep{lin2020ensemble}'s FedDF also uses knowledge distillation in a federated context to create a global model by fusing client models. Both FedBE and FedDF require sharing local model parameters and thus have the same privacy issues that \fedavg has.

\section{Federated Semi-Supervised Learning}

\paragraph{Preliminaries: }
We assume learning algorithms $\algo:\Xcal\times\Ycal\rightarrow\modelspace$ that produce models $\model\in\modelspace$ using a dataset $D\subset\Xcal\times\Ycal$ from an input space $\Xcal$ and output space $\Ycal$, i.e., $\model_{t+1}=\algo(D)$, or iterative learning algorithms \citep[cf. Chp. 2.1.4][]{kamp2019black} $\algo:\Xcal\times\Ycal\times\modelspace\rightarrow\modelspace$ that update a model $\model_{t+1}=\algo(D,\model_t)$.
Given a set of $m\in\NN$ clients with local datasets $D^1,\dots,D^m\subset\Xcal\times\Ycal$ drawn iid from a data distribution $\Dcal$ 
and a loss function $\loss:\Ycal\times\Ycal\rightarrow\RR$, the goal is to find a set of local models $\model^{1*},\dots,\model^{m*}\in\modelspace$ that each minimize the risk $\risk(\model)=\mathbb{E}_{(x,y)\sim\Dcal}[\loss(\model(x),y)]$.

In \defemph{centralized learning}, datasets are pooled and $\algo$ is applied to $D=\bigcup_{i\in [m]}D^i$ until convergence, e.g., as full or mini-batch training. Convergence is measured in terms of low training loss, small gradient, or small deviation from previous iterations. In standard \defemph{federated learning}~\citep{mcmahan2017communication}, $\algo$ is applied in parallel for $b\in\NN$ rounds on each client to produce local models $\model^1,\dots,\model^m$ which are centralized and aggregated using an aggregation operator $\agg:\modelspace^m\rightarrow\modelspace$, i.e., $\aggmodel = \agg(\model^1,\dots,\model^m)$. The aggregated model $\aggmodel$ is redistributed to local clients which perform another $b$ rounds of training using $\aggmodel$ as a starting point. This is iterated until convergence of $\aggmodel$. When aggregating by averaging, this method is known as federated averaging (\fedavg). 

In federated semi-supervised learning, a public unlabeled dataset $U$ is available to all clients. Clients can share predictions on $U$ (both hard and soft labels), as well as model parameters. Since sharing model parameters threatens privacy, we consider semi-supervised approaches that share predictions (other approaches are discussed in App.~\ref{app:smifed:discussion}).

\paragraph{A Federated Co-Training Approach: }
\SetKwFor{local}{Locally}{do}{}
\SetKwFor{coord}{At server}{do}{}
\begin{algorithm}[ht]
    \caption{Federated Co-Training (\fedct)}
    \label{alg:aimhi}
    \KwIn{communication period $b$, $m$ clients with local datasets $D^1,\dots,D^m$ and local learning algorithms algorithms $\mathcal{A}^1,\dots\mathcal{A}^m$,  unlabeled shared dataset $U$, total number of rounds $T$}
    \KwOut{final models $h^1_T,\dots,h^m_T$}
    \vspace{0.1cm}
    initialize local models $h_0^1,\dots,h_0^m\enspace$,$\enspace P\leftarrow\emptyset$\\
    \local{at client $i$ at time $t$}{
        $h_{t}^i\leftarrow\mathcal{A}^i(D^i\cup P,h^i_{t-1})$\\
        \If{$t\ \%\ b = b-1$}{
             $L_t^i\leftarrow h_t^i(U)$\\
             send $L_t^i$ to server and receive  $L_t$\\
             $P\leftarrow (U,L_t)$
        }
    }
    \coord{at time $t$}{
        receive local pseudo-labels $L_t^1,\dots,L_t^m$\\
        $L_t\leftarrow \text{consensus}(L_t^1,\dots,L_t^m)$\\
        send $L_t$ to all clients
    }
\end{algorithm}
We propose a federated variant of co-training---originally developed for multi-view semi-supervised learning~\citep{blum1998combining}---that iteratively updates pseudo-labels of $U$ via the consensus of shared predictions. That is, in a communication round $t\in\NN$ each client $i\in [m]$ shares local labels $L^i_t=h^i_t(U)$ (not soft predictions) on $U$ with the server, which produces a consensus labeling $L_t\subset\Ycal$ via an appropriate consensus mechanism. The resulting pseudo-labeled dataset $P=(U,L_t)$ augments local training sets. We call this approach federated co-training (\fedct). Sharing hard labels not only improves privacy, but also allows us to use any supervised learning method for local training.

We describe federated co-training in Algorithm~\ref{alg:aimhi}: at each client $i$, the local model is updated using local dataset $D^i$ combined with the current pseudo-labeled public dataset $P$ (line 4). In a communication round (line 5), the updated model is used to produce improved pseudo-labels $L^i$ for the unlabeled data $U$ (line 6), which are sent to a server (line 7). At the server, when all local prediction $L^1,\dots,L^m$ are received (line 12), a consensus $L$ is formed (line 13) and broadcasted to the clients (14). At the client, upon receiving the consensus labels (line 8), the pseudo-labeled dataset is updated (line 9), and another iteration of local training is performed. 
For classification problems the majority vote is a reliable consensus mechanism~\citep{papernot2016semi}.
\label{sec:convergence:analysis}
%
\paragraph{Convergence Analysis: } The convergence of federated co-training depends on the convergence of the local learning algorithms $\left(\algo^i\right)_{i\in [m]}$. Under the assumption that these algorithms converge on a fixed training set, it remains to show that the training set eventually stabilizes. That is, there exists a round $t_0\in\NN$ such that for all $t> t_0$ it holds that $L_t=L_{t-1}$. For classification problems, this depends on the local training accuracy. If local training accuracy reaches $a_t=1.0$, then the approach trivially converges, since local models will reproduce $L_{t}$ in every subsequent round. This assumption can usually be fulfilled for over-parameterized models. In the following, we show that the approach also converges with high probability, if the training accuracy is $\leq 1$, but increasing with $t$. This is typically the case in federated learning~\citep{zhao2018federated} and federated semi-supervised learning~\citep{papernot2016semi}. To simplify the analysis, we approximate this via a linearly increasing training accuracy.

\begin{restatable}{proposition}{convergence}
For $m\geq 3$ clients with local datasets $D^1,\dots,D^m$ and unlabeled dataset $U$, let $\algo^i$ for $i\in [m]$ be a set of learning algorithms that all achieve a linearly increasing training accuracy $a_t$ for all labelings of $U$, i.e., there exists $c\in\RR_+$ such that $a_t\geq1-c/t$, then there exists $t_0\in\NN$ such that $a_t\geq 1/2$ and \aimhi with majority vote converges with probability $1-\delta$, where
$\delta\leq|U|(4c)^{\frac{m}2}\zeta\left(\frac{m}{2},t_0+1\right)$
and $\zeta(x,q)$ is the Hurwitz zeta function.
\label{prop:convergence}
\end{restatable}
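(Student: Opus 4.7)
The plan is to reduce convergence of \fedct to the event that the pseudo-labels $L_t$ stabilize, i.e., that there is a round $t_0$ after which $L_t = L_{t-1}$ for all $t > t_0$. Once this holds, each local training set $D^i \cup (U,L_t)$ is fixed, and the assumed convergence of the local learners $\algo^i$ on any fixed labeling of $U$ immediately gives convergence of the local models $h^i_t$. So it suffices to upper bound $\Pr[\exists t > t_0, u \in U : L_t(u) \neq L_{t-1}(u)]$ by the stated $\delta$, which I would do by a union bound over instances and rounds on top of a combinatorial per-instance-per-round bound.

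\textbf{Per-instance, per-round bound.} For a single $u \in U$, the majority vote flips from round $t-1$ to $t$ only if at least $\lceil m/2 \rceil$ of the $m$ clients assign a label different from $L_{t-1}(u)$. Treating each client's disagreement as an independent event of probability at most $1 - a_t \leq c/t$ (the modeling interpretation of the linear training-accuracy assumption; see obstacle below), the probability that at least $m/2$ of $m$ such events co-occur is at most $\binom{m}{\lceil m/2 \rceil}(c/t)^{\lceil m/2 \rceil} \leq \binom{m}{m/2}(c/t)^{m/2}$, since $c/t \leq 1/2$ once $t \geq t_0 \geq 2c$. Using the standard central-binomial bound $\binom{m}{m/2} \leq 2^m$, this simplifies to $(4c)^{m/2}/t^{m/2}$. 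The condition $a_t \geq 1/2$ is precisely what ensures $c/t \leq 1/2$ and hence that increasing $t$ makes instability geometrically rarer.

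\textbf{Union bound and identification of the Hurwitz zeta.} A union bound over $u \in U$ yields $|U|(4c)^{m/2}/t^{m/2}$ as the probability of any change in round $t$. Summing over $t > t_0$ and recognizing $\sum_{t=t_0+1}^{\infty} t^{-m/2} = \zeta(m/2, t_0+1)$ by the definition of the Hurwitz zeta function gives
\[
\delta \;\leq\; |U|(4c)^{m/2}\,\zeta\!\left(\tfrac{m}{2},\, t_0+1\right),
\]
which matches the claim. The series converges exactly when $m/2 > 1$, i.e., $m \geq 3$, consistent with the hypothesis on the number of clients.

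\textbf{Anticipated obstacle.} The delicate point is the translation from the deterministic training-accuracy guarantee $a_t \geq 1 - c/t$ (which only bounds the fraction of misclassified points per client) into an independent per-instance-per-client failure probability of $c/t$ suitable for the binomial bound. Making this rigorous requires an explicit probabilistic model: either treating the randomness of the local training procedure (initialization, stochastic optimization) as producing client-wise independent errors on each $u$, or combining a worst-case count of disagreements with a combinatorial argument that the number of $u$'s on which at least $m/2$ clients disagree is controlled by $\binom{m}{m/2}(c/t)^{m/2}|U|$ in expectation (e.g., via the identity $\lvert\{u : X_u \geq k\}\rvert \leq \sum_u \binom{X_u}{k}$ and averaging $\sum_u \prod_{i\in S}\mathbb{1}[h^i_t(u)\neq L_{t-1}(u)]$ over subsets $S$). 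I expect the cleanest route to be to state the independent-errors model explicitly at the start of the proof, since the combinatorial intersection route only yields the same constant $(4c)^{m/2}$ under an additional near-independence assumption on the sets $M^i_t = \{u : h^i_t(u) \neq L_{t-1}(u)\}$.
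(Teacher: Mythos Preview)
Your proposal is correct and follows essentially the same route as the paper: reduce convergence to stabilization of the consensus labels, bound the per-round flip probability via a binomial tail, union-bound over $U$, and sum over rounds to obtain the Hurwitz zeta. The only technical difference is that the paper bounds the binomial tail via the Chernoff bound (yielding $4^{m/2}a_t^{m/2}(1-a_t)^{m/2}$, then dropping the $a_t^{m/2}\leq 1$ factor), whereas you use the elementary union-bound-over-subsets estimate $\binom{m}{\lceil m/2\rceil}p^{\lceil m/2\rceil}\leq 2^m(c/t)^{m/2}$; both land on the same $(4c)^{m/2}/t^{m/2}$. Your flagged obstacle---the passage from a deterministic accuracy fraction $a_t\geq 1-c/t$ to independent per-instance error probabilities---is real, and the paper makes exactly the same implicit modeling assumption without stating it.
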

\begin{proof}
(sketch:) If local models are of sufficient quality, then in round $t\geq t_0$, the probability that the consensus labels change, $\delta_t$, is bounded. The probability can be determined via the CDF of the binomial distribution, which can be bounded via the Chernoff bound:
$\delta_t\leq |U| 4^{\frac{m}2}a_t^{\frac{m}2}(1-a_t)^{\frac{m}2}$.
Then the probability that the consensus labels remain constant for the remainder, i.e., the sum of $\delta_t$ from $t_0$ to $\infty$,
is bounded as well. Using the assumption that $a_t$ grows linearly, we can express this infinite series as
$\sum_{t=t_0}^\infty \delta_t\lesssim \sum_{t=0}^\infty \frac{1}{t}^{\frac{m}{2}} -\sum_{t=0}^{t_0} \frac{1}{t}^{\frac{m}{2}}\enspace ,$
that is, the difference of the Riemann zeta function and the $t_0$-th generalized harmonic number, 
$\sum_{t=t_0}^\infty \delta_t\lesssim \zeta(m/2)-H_{t_0}^{m/2}$. 
This difference can be expressed via the Hurwitz zeta function $\zeta(m/2,t_0+1)$.
\end{proof}
The full proof is provided in App.~\ref{app:convergence}\footnote{\textbf{Extended version}: https://arxiv.org/abs/2310.05696}.
%
%

%
\paragraph{Communication Complexity: }
The communication complexity of $\aimhi$ is in the same order as standard federated learning, i.e., treating the message size as a constant, the communication complexity is in $\bigo(T/b)$, where $b$ is the communication period. The actual number of bits transferred in each round, however, depends on the size of $U$: Encoding predictions as binary vectors, for a classification problem with $C\in\NN$ classes the communication complexity is in $\bigo(TC|U|/b)$. As \citet{bistritz2020distributed} observed, transferring predictions on $U$ can reduce communication substantially over transferring the weights of large neural networks. For example on FashionMINST with $|U|=10^4$ and a a neural network with $669\,706$ parameters, \fedct and \fedavg both achieve a test accuracy of $0.82$, resp. $0.83$ (cf. Tab.~\ref{table:iidexp}), but \fedct transmits only $\approx 12.2KB$ bits in each round, whereas \fedavg transmits $\approx 2.6MB$. Thus, \fedct reduces communication by a factor of $\approx 214$.
\section{Differential Privacy for \fedct}
\label{sec:privacy}
\begin{figure*}[t]
\begin{minipage}[t]{0.48\textwidth}
    \centering
    \includegraphics[width=0.9\linewidth]{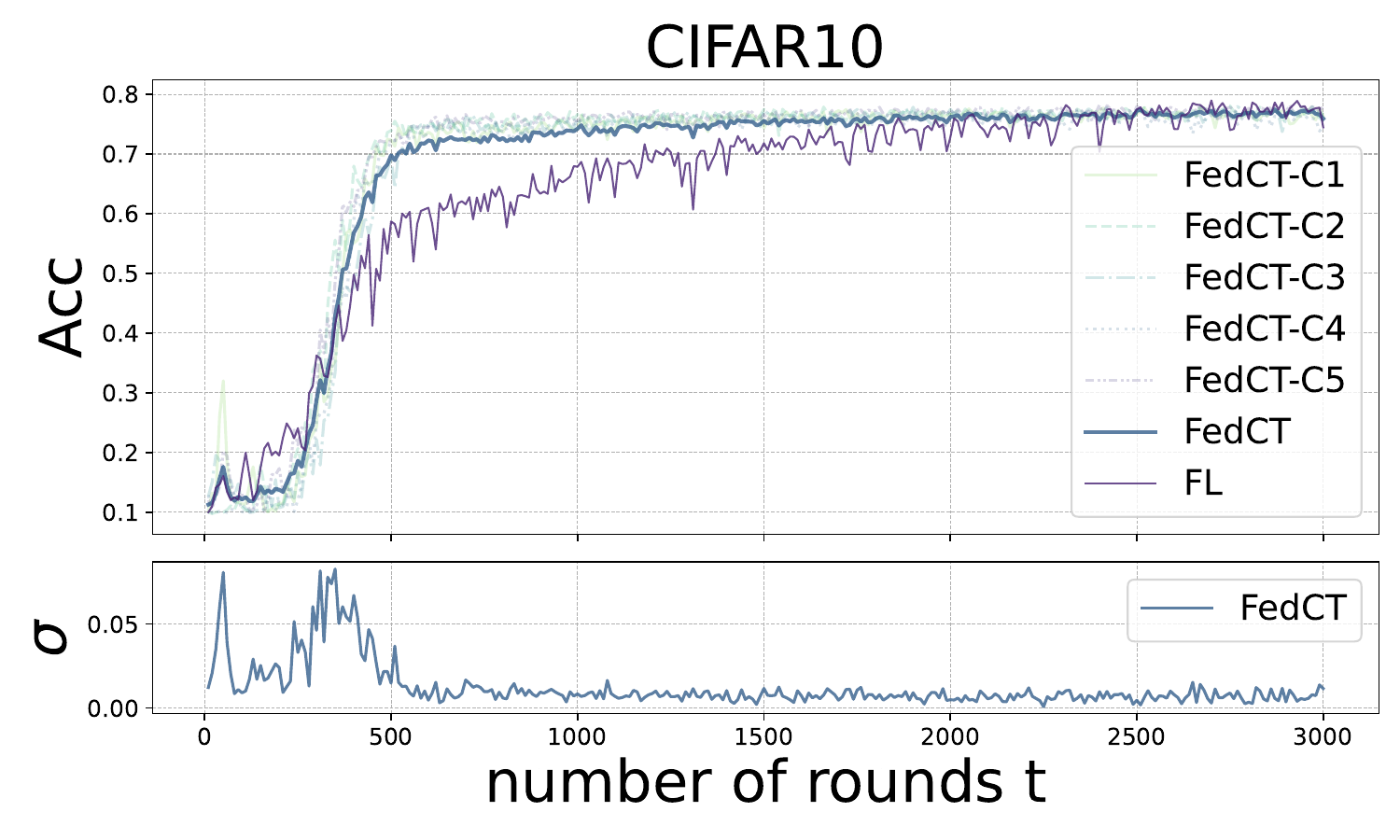}
    \caption{\textbf{Top}: Test ac. (ACC) over time on CIFAR10 of FL, and \fedct's local models and their mean. \textbf{Bottom}: Standard deviation of test accuracy of local models in \fedct.\vspace{-0.3cm}}
    \label{fig:iid:conv}
\end{minipage}\hfill
\begin{minipage}[t]{0.48\textwidth}
    \centering
    \includegraphics[width=0.9\linewidth]{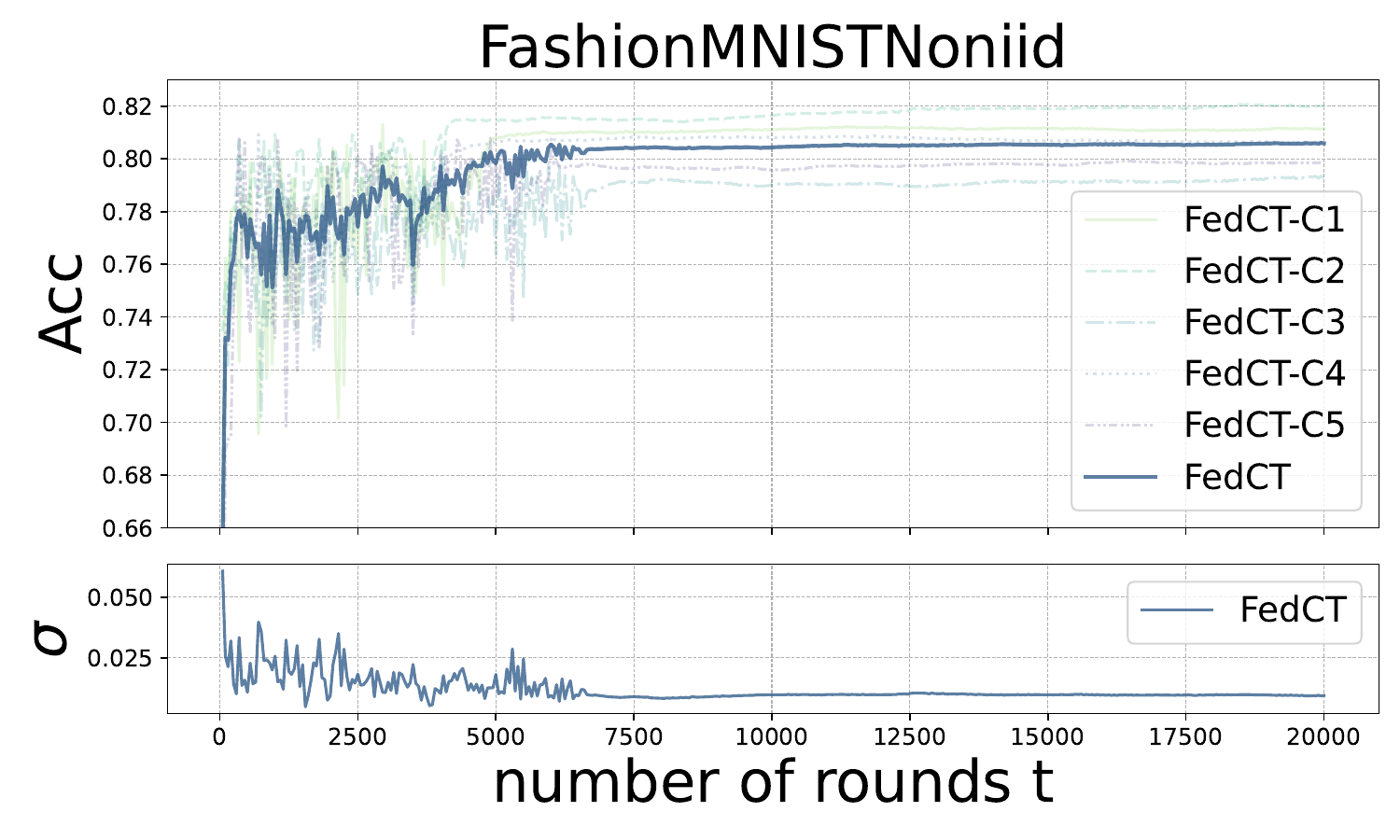}
    \caption{\textbf{Top}: Test acc- (ACC) over time of \fedct on non-iid distribution on FashionMNIST. \textbf{Bottom}: Standard deviation of test accuracy of local models in \fedct.\vspace{-0.3cm}}
    \label{fig:convnoniid}
\end{minipage}
\end{figure*}

We assume the following attack model: clients are honest and the server is honest-but-curious (or semi-honest, i.e., it follows the protocol execution correctly, but may try to infer sensitive information about clients). The attack goal is to infer sensitive information about local training data from shared information. This assumption is stronger than an attacker intercepting individual communication, or an honest-but-curious client, since the server receives shared information from all clients. We also assume that parties do not collude. Details are deferred to App.~\ref{app:privacy:details}. Sharing hard labels on an unlabeled dataset empirically improves  privacy substantially\footnote{This differs from label leakage~\citep{li2021membership}, where predictions on the private data are shared.}, as we show in Sec.~\ref{sec:experiments}. 
An empirical improvement in privacy is, however, no guarantee. Differential privacy instead provides a fundamental guarantee of privacy which is achieved through randomization of shared information: A randomized mechansim $\mathcal{M}$ with domain $\Xcal$ and range $\Ycal$ is $\epsilon$-differential private if for any two neighboring inputs $D, D^{'} \subset \Xcal $ and for a substet of outputs $S\in \Ycal $ it holds that
 $P\left(\mathcal{M}(D) \in S\right) \leq \exp(\epsilon) P\left(\mathcal{M}(D^{'}) \in S\right)$ \citep{dwork2014algorithmic}.
To obtain differential privacy (DP), the randomization has to be suitable to the information that is shared. In \fedct local clients share hard labels, i.e., categorical values in case of classification. Standard DP mechanisms, like the Gaussian~\citep{dwork2014algorithmic} or Laplacian mechanissm~\citep{dwork2006calibrating} are not suitable for categorical data. Therefore, we interpret labels as binary vectors via one-hot encoding. That is, for an unlabeled dataset $U$ and a classification problem with $C\in\NN$ classes, the predictions sent by a client with local dataset $D\subset\Xcal$ to the server can be interpreted as the binary matrix output of a deterministic mechanism $f(D)\in\{0,1\}^{|U|\times C}$.
Then, any DP-mechanism for binary data can be used, such as the XOR mechanism~\citep{ji2021differentially}, or randomized graph-coloring~\citep{d2021differential}\footnote{Label differential privacy~\citep{ghazi2021deep} is applicable, but require a different sensitivity analysis.}. 

To compute an actual DP-guarantee for a given DP-mechanism requires a bound on the sensitivity of the underlying deterministic function $f$. That is, given two neighboring datasets $D,D'$ (i.e., they differ only in a single element), the sensitivity of $f$ is defined as 
$s_f=\sup _{f(D), f\left(D^{\prime}\right)}\left\|f(D) \oplus f\left(D^{\prime}\right)\right\|_F^2\enspace ,$
where $\oplus$ denotes binary XOR.
For \fedct this means providing a bound on how much the predictions of a client on the unlabeled dataset can change if one element of its local training set is removed. In the following, we bound the sensitivity of any learning algorithm that is on-average-replace-one-stable with monotonically increasing stability rate $\epsilon:\mathbb{N}\rightarrow\mathbb{R}$ for a learning algorithm $\mathcal{A}$ and loss function $\ell$ (cf. \citet{shalev2014understanding}).
\begin{restatable}{proposition}{sensitivity}
    For classification models $h:\mathcal{X}\rightarrow\mathcal{Y}$, let $\ell$ be a loss function that upper bounds the $0-1$-loss and $\algo$ a learning algorithm that is on-average-replace-one stable with stability rate $\epsilon(n)$ for $\ell$. Let $D\cup U$ be a local training set with $|U|=n$, and $\delta\in (0,1)$. Then with probability $1-\delta$, the sensitivity $s_*$ of $\algo$ on $U$ is bounded by
    \[
        s_* \leq \left\lceil n\epsilon(n) + P\sqrt{n\epsilon(n)(1-\epsilon(n))} + \frac{P^2}{3}\right\rceil\enspace ,
    \]
    where $P=\Phi^{-1}(1-\delta)$ and $\Phi^{-1}$ is the probit function.
\label{prop:sensitivity}
\end{restatable}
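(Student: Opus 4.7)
The plan is to cast the sensitivity as a sum of binary disagreement indicators over the unlabeled set $U$, control each indicator's expectation via the stability hypothesis, and then convert that expectation bound into a high-probability bound by a Bernstein / normal-approximation concentration.

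First, I would fix two neighboring local training sets $D$ and $D'$ differing in exactly one element and set $h=\mathcal{A}(D\cup U)$ and $h'=\mathcal{A}(D'\cup U)$. For each $u\in U$ define the disagreement indicator $X_u=\mathbf{1}[h(u)\neq h'(u)]$. Under one-hot encoding a single disagreement at $u$ flips exactly two entries of the prediction matrix, so $\|f(D)\oplus f(D')\|_F^2$---and hence $s_*$---is controlled (up to the constant that is absorbed into the ceiling) by $S=\sum_{u\in U}X_u$. Taking the supremum over neighbouring $(D,D')$ then reduces the problem to a one-sided upper bound on $S$.

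Second, I would show that $\mathbb{E}[X_u]\leq\epsilon(n)$ for each $u\in U$. Because $\ell$ upper-bounds the $0$-$1$ loss, a disagreement between $h(u)$ and $h'(u)$ at a point with pseudo-label $y_u$ forces a loss of at least $1$ for at least one of the two models, so $X_u\leq\ell(h(u),y_u)+\ell(h'(u),y_u)$ pointwise. Since each $u\in U$ belongs to the training set $D\cup U$, the on-average-replace-one-stability hypothesis with rate $\epsilon(n)$ bounds the expected change in loss at such a training coordinate after replacing an element of $D$; an averaging/symmetry argument using the i.i.d.\ structure of $U$ then transfers the bound uniformly over $u$, yielding $\mathbb{E}[X_u]\leq\epsilon(n)$.

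Third, I would apply a one-sided Bernstein / Cornish--Fisher concentration inequality to $S$. With per-point mean at most $\epsilon(n)$, variance at most $\epsilon(n)(1-\epsilon(n))$ and $|X_u|\leq 1$, inverting the standard upper tail at confidence $1-\delta$ produces a deviation $t=P\sqrt{n\epsilon(n)(1-\epsilon(n))}+P^2/3$ with $P=\Phi^{-1}(1-\delta)$; the leading term is the normal-quantile piece of the bound and the $P^2/3$ term is the bounded-range correction. Since $S$ is integer valued, taking the ceiling delivers the stated upper bound on $s_*$.

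The main obstacle I anticipate is the concentration step: the indicators $X_u$ share the common randomness of $\mathcal{A}(D\cup U)$ and $\mathcal{A}(D'\cup U)$ and are therefore not independent, so a textbook Bernstein bound does not apply out of the box. I would expect to close that gap via an exchangeable-pair coupling on the replaced coordinate, a self-bounded-function argument, or a conservative variance bound that absorbs the dependence. A secondary but more subtle technicality is the transfer of the stability bound from the specific ``replaced'' coordinate in the Shalev-Shwartz / Ben-David definition to every individual $u\in U$; this hinges on the i.i.d.\ structure of the pseudo-labeled portion of the training set, and getting the right argument of $\epsilon(\cdot)$---namely $n=|U|$ as opposed to $|D|+|U|$---is the delicate accounting step on which the whole bound rests.
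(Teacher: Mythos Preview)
Your plan matches the paper's proof closely: both reduce the sensitivity to a count of disagreement indicators over $U$, bound the per-point flip probability by the stability rate (via the fact that $\ell$ dominates the $0$--$1$ loss), and then invert a binomial-type tail at level $1-\delta$. The only substantive difference is the concentration tool. The paper does not use Bernstein or a Cornish--Fisher expansion; instead it simply \emph{declares} that the disagreements form a Bernoulli process with success probability $\epsilon(|U|)$ and then invokes a closed-form bound on the binomial quantile function due to \citet{short2023binomial}, namely $Q(n,p,R)\leq\lceil np+\Phi^{-1}(R)\sqrt{np(1-p)}+\Phi^{-1}(R)^2/3\rceil$, from which the stated inequality follows by substitution. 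Your $P^2/3$ ``bounded-range'' term is thus, in the paper's derivation, just the correction term in Short's quantile bound rather than a Bernstein remainder.

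The two technical concerns you flag---dependence among the $X_u$'s and the transfer of the replace-one stability bound from the designated coordinate to every $u\in U$---are \emph{not} addressed in the paper's proof; it simply asserts the binomial model and moves on. So you are not missing a trick that closes those gaps: the paper's argument is at the same (heuristic) level of rigor on exactly the points you identified as obstacles, and your proposal is, if anything, more candid about where the hard work would lie.
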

The proof is provided in App.~\ref{app:sensitivity}. We now provide a DP analysis using the XOR-mechanism~\citep{ji2021differentially}. For that, let $\mathcal{B}\in\{0,1\}^{N\times P}$ denote a matrix-valued Bernoulli random variable, i.e., $\mathcal{B}\sim {\rm{Ber}}_{N,P}(\mathbf{\Theta},\mathbf{\Lambda}_{1,2},\cdots,\mathbf{\Lambda}_{N-1,N})$ with a matrix-valued Bernoulli distribution with quadratic exponential dependence structure. Here, $\mathbf{\Theta}$ is the ${P\times P}$ association parametric matrix including log-linear parameters describing the association structure of the columns, and $\mathbf{\Lambda}_{i,j}$ is the ${P\times P}$ association parametric matrix of rows $i$ and $j$. The XOR-mechanism applies this random matrix to the output of the deterministic mechanism via the XOR operator $\oplus$ and yields a randomized mechanism $\mathcal{M}(D)=f(D)\oplus \mathcal{B}$. 
We represent local predictions of clients $L^i_t$ as binary matrices and apply the XOR-mechanism, i.e., $\widehat{L}^i_t = L^i_t\oplus\mathcal{B}$. These randomized predictions are then send to the server, resulting in differentially private federated co-training (\dpaimhi): Defining the sensitivity of \dpaimhi as $s_*=\max\{s_{f^1},\dots,s_{f^m}\}$, it follows directly from Theorem~1 in~\citet{ji2021differentially} that \dpaimhi achieves $\epsilon$-differential privacy. 
\begin{corollary}
Applying the XOR mechanism to \fedct with sensitivity $s_*$ achieves $\epsilon$-DP if $\mathbf{\Theta}$ and $\mathbf{\Lambda}_{i,j}$ satisfy
\begin{equation}
s_*(||\boldsymbol{\lambda}(\mathbf{\Theta})||_2    +   \textstyle\sum_{i=1}^{N-1}\sum_{j= i+1}^N  ||\boldsymbol{\lambda}(\mathbf{\Lambda}_{i,j})||_2  \Big)\leq \epsilon,  
\label{condition_general}
\end{equation}
where 
$||\boldsymbol{\lambda}(\mathbf{\Theta})||_2$ and $||\boldsymbol{\lambda}(\mathbf{\Lambda}_{i,j})||_2$ are the $l_2$ norms of the eigenvalues of $\mathbf{\Theta}$ and $\mathbf{\Lambda}_{i,j}$. 
\label{cor:dpaimhi}
\end{corollary}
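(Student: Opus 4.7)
The plan is to reduce the corollary to a direct invocation of Theorem~1 of \citet{ji2021differentially}, with the only work being to verify that our per-client deterministic map $f^i$ fits its hypotheses and that the worst-case sensitivity $s_*$ majorizes each $s_{f^i}$ appropriately. Concretely, I would first fix a single client $i$ and identify its one-hot-encoded prediction matrix $L^i_t \in \{0,1\}^{|U|\times C}$ as the output of a deterministic function $f^i$ whose input is the local labeled data. The released quantity $\widehat{L}^i_t = L^i_t \oplus \mathcal{B}$ is then an instance of the XOR mechanism $\mathcal{M}(D) = f(D) \oplus \mathcal{B}$ with binary-matrix dimensions $N=|U|$ and $P=C$, for which \citet{ji2021differentially}'s Theorem~1 guarantees $\epsilon$-DP provided the sensitivity $s_{f^i}$ of $f^i$ satisfies
\[
s_{f^i}\Bigl(\|\boldsymbol{\lambda}(\mathbf{\Theta})\|_2 + \textstyle\sum_{i=1}^{N-1}\sum_{j=i+1}^{N}\|\boldsymbol{\lambda}(\mathbf{\Lambda}_{i,j})\|_2\Bigr) \leq \epsilon.
\]

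Second, I would lift this per-client statement to a uniform guarantee over all clients. The sensitivity $s_{f^i}$ is controlled by Proposition~\ref{prop:sensitivity} whenever $\algo^i$ is on-average-replace-one stable; defining $s_* = \max_{i\in [m]} s_{f^i}$ gives a single quantity with $s_{f^i}\leq s_*$ for every $i$. Because the left-hand side of the condition in Eq.~\eqref{condition_general} is monotone non-decreasing in the sensitivity, the assumed inequality with $s_*$ implies the per-client inequality for each $i\in[m]$. Hence, each randomized release $\widehat{L}^i_t$ is $\epsilon$-DP with respect to its local neighboring-dataset relation, which is exactly the statement of the corollary.

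Third, I would verify two small bookkeeping points that are easy to overlook. (a) The notion of neighboring datasets used by Ji et al. (differing in a single entry) must match the one used to bound $s_*$ in Proposition~\ref{prop:sensitivity}; since both are ``replace-one'' on the local training set, they coincide. (b) The statement of the corollary is a single-release guarantee, matching the scope of the cited theorem; any statement across the $T/b$ communication rounds would require an additional composition step (basic, advanced, or Rényi), but this is outside the scope of the corollary as written.

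The main obstacle is therefore not analytical but notational: ensuring that the roles of $N$ and $P$ in Ji et al.'s theorem are consistently instantiated as $|U|$ and $C$, that the sensitivity definition $s_f = \sup\|f(D)\oplus f(D')\|_F^2$ matches the quantity bounded by Proposition~\ref{prop:sensitivity} (which counts differing binary entries and thus coincides with the squared Frobenius norm of an XOR of $\{0,1\}$-matrices), and that the monotonicity argument used to replace $s_{f^i}$ by $s_*$ is legitimate. Once these identifications are in place, the conclusion follows immediately from Theorem~1 of \citet{ji2021differentially}.
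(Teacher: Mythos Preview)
Your proposal is correct and matches the paper's approach: the paper does not give a standalone proof but simply states that, with $s_*=\max\{s_{f^1},\dots,s_{f^m}\}$, the result ``follows directly from Theorem~1 in~\citet{ji2021differentially}.'' Your write-up fills in the bookkeeping (instantiating $N=|U|$, $P=C$, matching neighboring-dataset notions, and the monotonicity step from $s_{f^i}$ to $s_*$) that the paper leaves implicit, so it is in fact more detailed than the paper's own justification while following the identical route.
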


%
On-average-replace-one-stability holds for many supervised learning methods. For example, every regularized risk minimizer for a convex, Lipschitz loss using a strongly convex regularizer, like Thikonov-regularization, is on-average-replace-one-stable~\citep[cf. Chp. 13.3 in ][]{shalev2014understanding}, as well as deep learning with SGD~\citep{hardt2016train}, including cases involving non-smooth loss functions~\citep{bassily2020stability}. We empirically evaluate the privacy-utility trade-off of \fedct with differential privacy in Sec.~\ref{sec:experiments}.

\section{Empirical Evaluation}
\label{sec:experiments}

We empirically show that \fedct presents a favorable privacy-utility trade-off compared to other federated learning approaches by showing that it achieves similar test accuracy with substantially improved privacy. We compare \fedct to federated averaging~\citep{mcmahan2017communication} (\fedavg), differentially private federated averaging (\dpfl) achieved via the Gaussian mechanism \citep{geyer2017differentially}, and distributed distillation~\citep{bistritz2020distributed} (\dd)\footnote{The code to reproduce all experiments is available at \oururl} on $3$ benchmark datasets and $2$ medical image classification datasets, as well as on a fine-tuning task for large language models. We also compare with \pate~\citep{papernot2016semi}, although it is not collaborative, because it shares hard labels (see App.~\ref{app:PATE} for details).

\paragraph{Experimental Setup}
We use three benchmark image classification datasets, CIFAR10~\citep{krizhevsky2010cifar}, FashionMNIST~\citep{xiao2017/online}, and SVHN~ \citep{netzer2011reading}, as well as two real medical image classification datasets, MRI scans for brain tumor detection~\citep{MRIdata}, 
and chest X-rays for pneumonia detection~\citep{kermany2018identifying}. 
We evaluate \fedct with interpretable models on five benchmark datasets, WineQuality~\citep{cortez2009modeling}, Breastcancer~\citep{street1993nuclear}, AdultsIncome~\citep{misc_adult_2}, Mushroom~\citep{misc_mushroom_73}, and Covertype~\citep{misc_covertype_31}. We fine-tune an LLM on the IMDB dataset~\citep{maas-EtAl:2011:ACL-HLT2011} and the Twitter dataset~\citep{TwitterSentimentAnalysis}. We first divide each dataset into a test and training set and further divide the training set into an unlabeled dataset $U$ and a set of $m$ local training sets (sampling iid. for all experiments, except for the experiments on heterogeneous data distributions). We also investigated how the distribution and size of unlabeled datasets affect performance, as demonstrated in experiments App.\ref{app:unlabeled:distribution} and App.\ref{app:effect:unlabeled}. The architectures of the neural networks are provided in App.~\ref{app:exp:details}. The hyper-parameters are optimized individually for all methods on a subset of the training set via cross-validation. We select the number of rounds to be the maximum rounds required so that all methods converge, i.e., $T=2*10^4$.
We measure empirical privacy vulnerability by performing a large number of membership inference attacks and compute the probability of inferring upon sensitive data, using the ML Privacy Meter tool~\citep{murakonda2020ml}. The \textbf{vulnerability (VUL)} of a method is the ROC AUC of membership attacks over $K$ runs over the entire training set. A vulnerability of $1.0$ means that membership can be inferred with certainty, whereas $0.5$ means that deciding on membership is a random guess. While our privacy evaluation reflects the relative performance of the different methods, it can be inaccurate in assessing the actual privacy risk, particularly for the most vulnerable data points~\citep{aerni2024evaluations}. More details on the follwoing experiments, additional experiments on using mixed model types, a comparison with \fedmd~\citep{li2019fedmd}, and an abblation study can be found in App.~\ref{app:extraexp}

\begin{table*}[bt]
\centering
\caption{Test acc. (ACC) and privacy vulnerability (VUL, smaller is better) for $m=5$ clients on iid data.}
\begin{adjustbox}{width=1\textwidth}
\small
\begin{tabular}{c|cc|cc|cc|cc|cc}
\hline 
\textbf{Method} & \multicolumn{2}{c|}{\textbf{CIFAR10}} & \multicolumn{2}{c|}{\textbf{FashionMNIST}} & \multicolumn{2}{c|}{\textbf{Pneumonia}} & \multicolumn{2}{c|}{\textbf{MRI}} & \multicolumn{2}{c}{\textbf{SVHN}} \\
 & ACC & VUL & ACC & VUL & ACC & VUL & ACC & VUL & ACC & VUL \\
\hline
\textbf{\fedct} & $\mathbf{0.77}\pm 0.003$ & $0.52$ & $\mathbf{0.84}\pm 0.004$ & $\mathbf{0.51}$ & $\mathbf{0.78}\pm 0.008$ & $\mathbf{0.51}$ & $0.64\pm 0.004$ & $0.52$ & $\mathbf{0.91}\pm 0.002$ & $\mathbf{0.53}$ \\
\textbf{\dpaimhi} $(\epsilon=0.1)$ & $0.76 \pm 0.002$ & $\mathbf{0.51}$ & $0.80 \pm 0.001$ & $0.52$ & $0.75 \pm 0.004$ & $\mathbf{0.51}$ & $0.62 \pm 0.002$ & $\mathbf{0.51}$ & $0.86\pm 0.001$ & $\mathbf{0.53}$ \\
\textbf{\fedavg} & $\mathbf{0.77}\pm 0.020$ & $0.73$ & $0.83\pm 0.024$ & $0.72$ & $0.74\pm 0.013$ & $0.76$ & $0.66\pm 0.015$ & $0.73$ & $\mathbf{0.91}\pm 0.026$ & $0.71$ \\
\textbf{\dpfl} $(\epsilon=0.1)$ & $0.68 \pm 0.002$ & $0.70$ & $0.69 \pm 0.002$ & $0.71$ & $0.61\pm 0.004$ & $0.69$ & $0.56\pm 0.003$ & $0.62$ & $0.71\pm 0.005$ & $0.70$ \\
\textbf{\dd} & $0.70\pm 0.012$ & $0.61$ & $0.82 \pm 0.016$ & $0.60$ & $\mathbf{0.78} \pm 0.003$ & $0.63$ & $\mathbf{0.68} \pm 0.008$ & $0.60$ & $0.73 \pm 0.014$ & $0.59$ \\
\textbf{\pate} & $0.69\pm 0.002$ & $0.60$ & $0.73\pm 0.001$ & $0.59$ & $0.75\pm 0.003$ & $0.59$ & $0.61\pm 0.001$ & $0.60$ & $0.87\pm 0.002$ & $0.58$ \\
\textbf{\dppate} & $0.67 \pm 0.003$ & $0.58$ & $0.73 \pm 0.002$ & $0.57$ & $0.71\pm 0.001$ & $0.58$ & $0.60 \pm 0.001$ & $0.57$ & $0.86 \pm 0.002$ & $0.57$ \\
\hline
\end{tabular}
\end{adjustbox}
\vspace{0.1cm}
\vspace{-0.4cm}
\label{table:iidexp}
\end{table*}

\begin{figure}[ht]
\vspace{-0.2cm}
    \centering
    \includegraphics[width=0.95\linewidth]{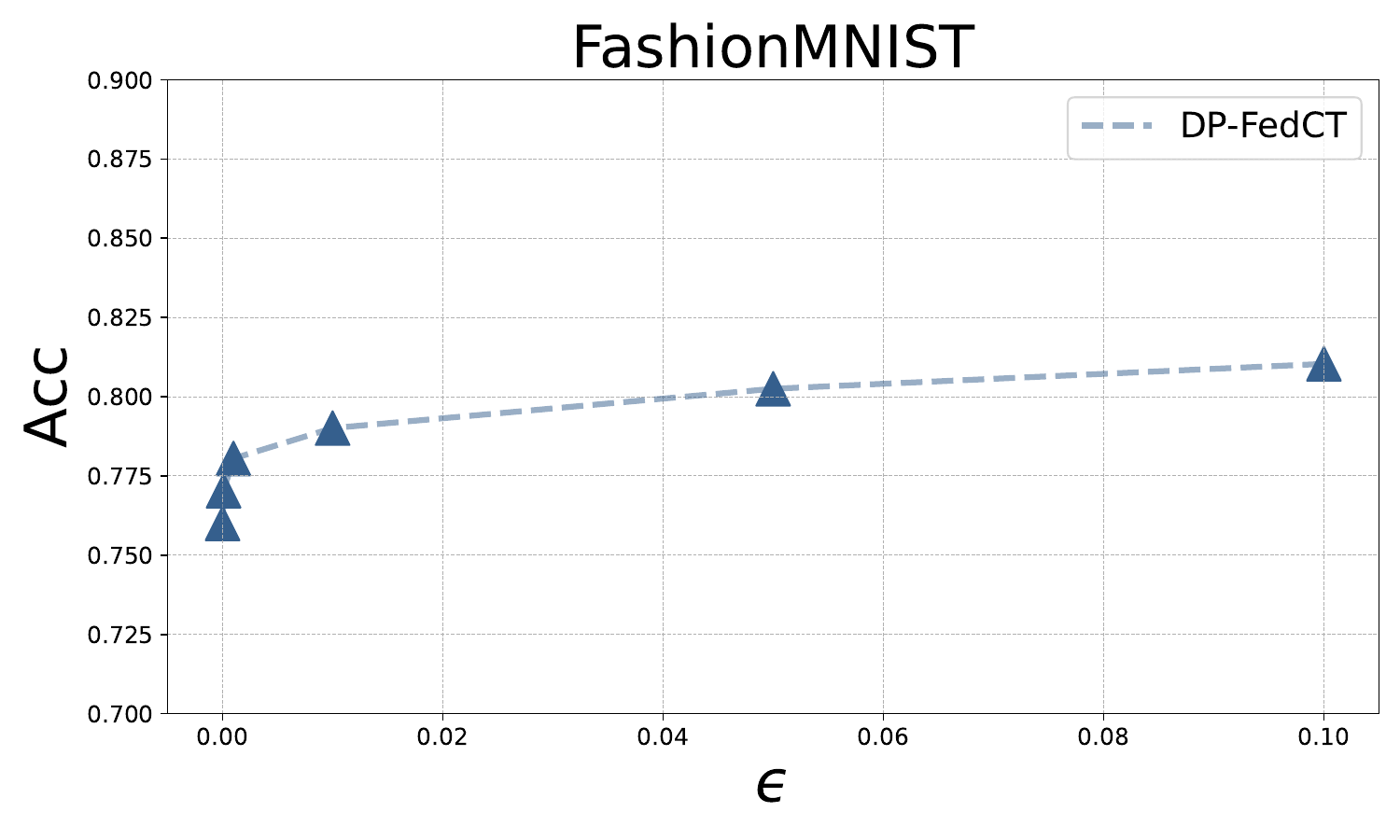}
    \caption{Accuracy (ACC) of \dpaimhi on the FashionMNIST dataset under different levels of privacy $\epsilon$.\vspace{-0.3cm}}
    \label{fig:dpaimhi}
\end{figure}
\paragraph{Privacy-Utility-Trade-Off:}
We first evaluate the performance of \fedct and baselines for deep learning on homogeneous data distributions. 
We use an unlabeled dataset of size $|U|=10^4$ for CIFAR10, $|U| = 5\cdot 10^4$ for FashionMNIST, $|U|=170$ for MRI, $|U|=900$ for Pneumonia, and $|U|=35\cdot 10^4$ for SVHM. Note that only \fedct, \dpfedct, \pate, and \dd use the unlabeled dataset. The remaining training data is distributed over $m=5$ clients. We repeat all experiments $3$ times and report average test accuracy and maximum deviation (see App.~\ref{app:exp:details} for details). 

The results presented in Tab.~\ref{table:iidexp} show that \fedct achieves a test accuracy comparable to both \fedavg, and \dd, while preserving privacy to the highest level. That is, \fedct performs best on CIFAR10, has a similar performance to both on FashionMNIST, Pneumonia, and SVHN, and 
is slightly worse on MRI. All collaborative approaches outperform \pate. The vulnerability is around $0.5$, so that membership inference attacks are akin to random guessing. \fedavg instead consistently has a vulnerability over $0.7$. \dpfl improves privacy, but reduces the test accuracy substantially. 
Our experiments show that \dd substantially improves privacy over both \fedavg and \dpfl, yet still is vulnerable ($VUL\approx 0.6$). 
Since \fedct does not produce a global model, we investigate the convergence behavior of individual client models in terms of test accuracy on CIFAR10 in Fig.~\ref{fig:iid:conv}. From the standard deviation between clients $\sigma$ we see that they converge to a consensus after around $700$ rounds with only slight deviations afterwards. Overall, \fedct converges slightly faster than \fedavg, though the latter increases its test accuracy slightly further, eventually.

\begin{table}
\centering
    \caption{Non-iid data distributions with $m=10$ clients.} 
    \begin{adjustbox}{width=0.45\textwidth}
    \small
    \centering
    \begin{tabular}{l|c|c|c}
        \hline
        & \vtop{\hbox{\strut $\alpha_1=100$}\hbox{\strut $\alpha_2=2$}} & \vtop{\hbox{\strut $\alpha_1=100$}\hbox{\strut $\alpha_2=0.01$}}& \vtop{\hbox{\strut $\alpha_1=0.01$}\hbox{\strut $\alpha_2=0.01$}} \\
        \hline
        \fedct & $0.7975$ & $0.7905$ & $0.6252$ \\
        \fedct(QM) & $0.7968$ & $0.7808$ & $0.7358$ \\
        \fedavg & $0.8150$ & $0.7950$ & $0.7346$ \\
        \dd & $0.7684$ & $0.7219$ & $0.5580$ \\
        \hline
    \end{tabular}
    \end{adjustbox}
    \vspace{-0.2cm}   
    \label{tab:alpha} 
\end{table}
\paragraph{Privacy-Utility Trade-Off With Differential Privacy:}
Differential privacy guarantees typically come at a cost in terms of utility, which in our case means a loss in model quality. Analyzing this privacy-utility trade-off requires estimating the sensitivity. Since stability-bounds for neural networks tend to underestimate the on-average-replace-one stability, leading to vacuous results for generalization~\citep{nagarajan2019uniform, petzka2021relative}, using them to bound sensitivity would underestimate utility. 
Using an empirical approximation provides a more accurate estimate for the privacy-utility trade-off~\citep{rubinstein2017pain}. 
To get this approximation, we apply \fedct with $m=5$ clients on the FashionMNIST dataset~\citep{xiao2017/online} for various privacy levels $\epsilon$. We estimate the sensitivity of \dpaimhi by sampling $n=100$ datasets $D'_1,\dots,D'_n$ neighboring a local training set $D$ to approximate 
$s_*\approx \max_{i\in [n]}\|f(D)\oplus f(D'_i)\|_F^2\enspace ,$
which yields $s_*\approx3000$. 
Using this estimate, Fig.~\ref{fig:dpaimhi} shows that \dpaimhi achieves a high utility in terms of test accuracy even for moderate-to-high privacy levels $\epsilon$ with an accuracy of $0.8$ for $\epsilon=0.1$\footnote{Note that using the trivial upper bound of $s^W_*=|U|=5\cdot 10^4$ instead of our estimate results in a slightly higher epsilon: for a noise level that achieves $\epsilon=0.1$ with the empirical estimate of $s_*$, the worst-case bound results in $\epsilon = 0.1\cdot s_*^W/s_*=5/3$, instead.} (without any noise, \fedct achieves an accuracy of $0.82$ in this setup). As~\citet{papernot2016semi} observed, the reason for the good trade-off probably lies in the consensus mechanism: for a single unlabeled example $\mu>m/C$ clients predict the majority class, so the XOR-mechanism has to change the predictions of at least $\mu-m/C$ many clients to change its consensus label. 
%

%

\begin{table*}[t]
\centering 
\caption{Test accuracy (ACC) of \fedct with interpretable models.}
\begin{adjustbox}{width=1\textwidth}
\small
\begin{tabular}{c|cc|cc|cc|cc}
\hline 
\multicolumn{1}{c|}{Dataset}& \multicolumn{2}{c|}{DT}&\multicolumn{2}{c|}{RuleFit} &\multicolumn{2}{c|}{XGBoost} &\multicolumn{2}{c}{Random Forest} \\
\multicolumn{1}{c|} {}&\fedct&\cent&\fedct&\cent&\fedct&\cent&\fedct&\cent\\
\hline
 WineQuality &$0.95 \pm 0.01$&$0.92$& $0.93 \pm 0.01 $&$0.95$& $0.94 \pm 0.01$&$0.94$& $0.96 \pm 0.01$&$0.98$ \\
 BreastCancer&$0.89 \pm 0.01 $&$0.89$& $0.92 \pm 0.01$&$0.93$&$0.93 \pm 0.01$&$0.94$& $0.90 \pm 0.02$&$0.93$  \\
 AdultsIncome&$0.81 \pm 0.01 $&$0.82$& $0.84 \pm 0.02$&$0.85$&$0.85 \pm 0.02$&$0.87$& $0.85 \pm 0.01$&$0.86$  \\
 Mushroom    &$0.98 \pm 0.01 $&$1$& $0.98 \pm 0.02$&$1$&$0.98 \pm 0.01$&$1$& $0.99 \pm 0.01$&$1$  \\
 Covertype   &$0.88 \pm 0.02$&$0.94$& $0.73 \pm 0.02$&$0.76$&$0.84 \pm 0.02$&$0.87$& $0.90 \pm 0.01 $&$0.95$  \\
\hline
\end{tabular}
\end{adjustbox}
\vspace{-0.4cm}
\label{table:interpmodels}
\end{table*}
\paragraph{Heterogeneous Data Distributions:}
In many realistic applications, local datasets are not heterogeneous. We show that \fedct performs similar to \fedavg for non-pathological non-iid data distributions, but \fedct with majority voting, as well as soft label sharing, are outperformed on pathological non-iid distributions. This is remedied by using a qualified majority vote as consensus. 
For a non-pathological non-iid data distribution, we sample half of the training data from a Dirichlet distribution over labels with $\alpha_1=100$ (mild heterogeneity) and the other half with $\alpha_2=2$ (medium heterogeneity) and $\alpha_2=0.01$ (strong heterogeneity). For both cases, we see that \fedct, \dd, and \fedavg perform similarly (see Tab.~\ref{tab:alpha}). In Fig.~\ref{fig:convnoniid} we show the convergence behavior of individual clients for $\alpha_1=100, \alpha_2=2$ which is similar to the iid case, but with higher variance between individual client models. For the pathological case ($\alpha_1=\alpha_2=0.01$) \fedavg still performs well, outperforming both \fedct and \dd. We conjecture that a meaningful consensus requires clients to achieve a minimum performance for all labels. In the pathological case, clients observe only a small subset of labels and thus perform poorly on a majority of data. Using a qualified majority (\fedct(QM)) with a quorum of $0.9$ improves the performance of \fedct to the level of \fedavg, showing that \fedct using QM also performs well in the pathological case. Further details are deferred to App.~\ref{app:deepnoniid}.


%
\paragraph{Effect of Unlabeled Dataset Distribution}
\label{app:unlabeled:distribution}
In our main experiments in Tab.\ref{table:iidexp}, the unlabeled set $U$ is drawn iid from the respective datasets. In practice, however, publicly available datasets will be similar, but not equal to a private dataset's distribution. We therefore explore what the impact of the distribution of the unlabeled dataset is on the performance of \fedct. We investigate this using iid samples from the CIFAR10 dataset as private data. We then test \fedct on various unlabeled distribution, ranging from similar to private data to very dissimilar. As very similar unlabeled dataset, we use a non-iid sample of CIFAR10 sampling using a Dirichlet distribution over labels with $\alpha=0.5$. As dissimilar unlabeled dataset, we use an iid sample of CIFAR100---the distribution is highly dissimilar, since not a single class of CIFAR10 is present in CIFAR100. As a middle ground, we have chosen classes from CIFAR100 that are semantically more similar to the CIFAR10 classes (see Tab.~\ref{table:classmappingC100C10}). In all cases we have chosen $|U|=8500$. The results shown in Tab.~\ref{table:difdist} show that while the test accuracy of \fedct decreases the more dissimilar the unlabaled data distribution is from private data, it remains high even for very dissimilar distributions.

\begin{table}[H]
\hfill
\centering
    \caption{Test accuracy (ACC) of \fedct on CIFAR10 for three different scenarios of using a public dataset with different data distribution.}
    \begin{adjustbox}{width=0.35\textwidth}
    \small
    \begin{tabular}{c|c}
        \hline 
        \textbf{Distribution of }$|U|$ &\textbf{\fedct} \\
        \hline
        iid CIFAR10 & $0.77$ \\
        non-iid CIFAR10 ($\alpha=1.0$) & $0.74$ \\
        non-iid CIFAR10 ($\alpha=0.5$) & $0.72$ \\
        similar CIFAR100 classes & $0.71$ \\
        iid sample of CIFAR100  & $0.65$ \\
        \hline
    \end{tabular}
    \end{adjustbox}
    \label{table:difdist}
\end{table}

\paragraph{Interpretable Models:}
\fedct allows training models that cannot be aggregated in \fedavg and cannot be trained via soft label sharing (e.g., as in \dd). Many interpretable models, such as decision trees~\citep{quinlan1986induction}, XGBoost~\citep{chen2016xgboost}, Random Forest~\citep{breiman2001random}, and RuleFit~\citep{friedman2008predictive} fall under this class. We evaluate \fedct on such models on $5$ benchmark datasets with $m=5$ clients and compare its performance to pooling all data and training a model centrally (Centralized). 
The results in Tab.~\ref{table:interpmodels} show that \fedct can train interpretable models in a federated learning setup, achieving a model quality comparable to centralized training. In App.~\ref{app:mix:model} we show that \fedct can also train mixed models, i.e., each client training a different model type, to high accuracy.

\paragraph{Fine-Tuning Large Language Models:}
In fine-tuning, model quality is already high from the start so that pseudo-labels are likely of high quality from the start. If true, semi-supervised approaches should improve performance over \fedavg. To test this hypothesis, we fine-tune the GPT2 model transformer with a sequence classification head (linear layer) comprising of $124.44$ million parameters on the IMBD sentimental dataset~\citep{maas-EtAl:2011:ACL-HLT2011} and the Twitter sentiment dataset ~\citep{TwitterSentimentAnalysis}, using $m=10$ clients with $|U|=150$ for IMBD and $|U|=35,000$ for Twitter. Indeed, we observe that on IMDB, \fedct achieves a test accuracy of $0.73$, whereas \fedavg achieves an ACC of $0.59$ and on Twitter \fedct achieves a test accuracy of $0.65$, whereas \fedavg achieves an ACC of $0.61$ (see App.~\ref{app:llm} for details). 

\section{Discussion and Conclusion}
\label{sec:discussion}
We propose a federated semi-supervised co-training approach that collaboratively trains models via sharing predictions on an unlabeled dataset $U$. In many applications, such unlabeled datasets are available, e.g. in healthcare\footnote{Examples for large public health databases are the US NCHS DB,  UK NHS DB, UK Biobank~\citep{sudlow2015uk}, MIMIC-III database~\citep{johnson2016mimic}, TCGA public dataset~\citep{tcga}, and EU EHDS.}, or can be synthetically generated~\citep{el2020practical}. 
A limitation of \fedct is that it does not produce a global model. Instead, it promotes agreement between local models. Our experiments, however, show that local models quickly converge to similar test accuracy so that each local model could act as global model. At the same time, \fedct allows us to use different models that can be tailored to each client (see App.~\ref{app:mix:model}). A second limitation, revealed by our experiments, is that on pathological non-iid data, where clients only observe a small subset of labels, soft label sharing and hard label sharing with majority voting are outperformed by \fedavg. Using a qualified majority vote as consensus remedies this issue. Similar to federated learning variants tailored to heterogeneous data, such as FedProx~\citep{li2020federated} and SCAFFOLD~\citep{karimireddy2020scaffold}, it would make for excellent future work to further improve \fedct's performance in this case, e.g., by using more elaborate consensus mechanisms~\citep{warfield2004simultaneous}.
Furthermore, investigating client subsampling in \fedct and its impact on the consensus mechanism, other communication-efficient strategies~\citep[e.g.,][]{kamp2016communication, kamp2019efficient}, and learning from small datasets~\citep{kamp2023federated} is interesting. 
The results on fine-tuning LLMs are promising and suggest that semi-supervised learning can be particularly beneficial in federated fine-tuning of foundation models, which will be interesting to further investigate in the future.

We show that \fedct matches the model quality of \fedavg and \dd while significantly improving privacy over both \fedavg and \dd, as well as \dpfl. From this we conclude that sharing little is enough: sharing hard labels improves privacy substantially while maintaining a favorable privacy-utility trade-off, in particular for fine-tuning LLMs. Moreover, \fedct allows us to train interpretable models, such as rule ensembles and XGBoost, in a federated learning setup. The proposed approach facilitates the deployment of machine learning in critical domains such as healthcare, where highly sensitive private datasets are distributed across sites, and public unlabeled datasets are often available.

\vfill
\pagebreak

\section*{Acknowledgements}
Amr Abourayya, Jens Kleesiek, and Michael Kamp received support from the Cancer Research Center Cologne Essen (CCCE). Erman Ayday was partly supported by the National Science Foundation (NSF) under grant numbers 2141622, 2427505, and OAC-2112606.
\bibliography{references}

\begin{thebibliography}{95}
\providecommand{\natexlab}[1]{#1}

\bibitem[{Aerni, Zhang, and Tram{\`e}r(2024)}]{aerni2024evaluations}
Aerni, M.; Zhang, J.; and Tram{\`e}r, F. 2024.
\newblock Evaluations of Machine Learning Privacy Defenses are Misleading.
\newblock \emph{ACM CCS}, 1271--1284.

\bibitem[{Bache and Lichman(1987)}]{misc_mushroom_73}
Bache, K.; and Lichman, M. 1987.
\newblock {Mushroom}.
\newblock UCI Machine Learning Repository.

\bibitem[{Bagdasaryan et~al.(2020)Bagdasaryan, Veit, Hua, Estrin, and Shmatikov}]{bagdasaryan2020backdoor}
Bagdasaryan, E.; Veit, A.; Hua, Y.; Estrin, D.; and Shmatikov, V. 2020.
\newblock How to backdoor federated learning.
\newblock In \emph{AISTATS, PMLR}, 2938--2948.

\bibitem[{Bassily et~al.(2020)Bassily, Feldman, Guzm{\'a}n, and Talwar}]{bassily2020stability}
Bassily, R.; Feldman, V.; Guzm{\'a}n, C.; and Talwar, K. 2020.
\newblock Stability of stochastic gradient descent on nonsmooth convex losses.
\newblock \emph{NeurIPS}, 33: 4381--4391.

\bibitem[{Becker and Kohavi(1996)}]{misc_adult_2}
Becker, B.; and Kohavi, R. 1996.
\newblock {Adult}.
\newblock UCI Machine Learning Repository.

\bibitem[{Bistritz, Mann, and Bambos(2020)}]{bistritz2020distributed}
Bistritz, I.; Mann, A.; and Bambos, N. 2020.
\newblock Distributed distillation for on-device learning.
\newblock \emph{NeurIPS}, 33: 22593--22604.

\bibitem[{Blackard(1998)}]{misc_covertype_31}
Blackard, J. 1998.
\newblock {Covertype}.
\newblock UCI Machine Learning Repository.

\bibitem[{Blum and Mitchell(1998)}]{blum1998combining}
Blum, A.; and Mitchell, T. 1998.
\newblock Combining labeled and unlabeled data with co-training.
\newblock In \emph{COLT}, 92--100.

\bibitem[{Breiman(2001)}]{breiman2001random}
Breiman, L. 2001.
\newblock Random forests.
\newblock \emph{Machine learning}, 45: 5--32.

\bibitem[{Chakrabarty(2018)}]{MRIdata}
Chakrabarty, N. 2018.
\newblock Brain Tumor Detection, Kaggle.

\bibitem[{Chaudhuri, Imola, and Machanavajjhala(2019)}]{chaudhuri2019capacity}
Chaudhuri, K.; Imola, J.; and Machanavajjhala, A. 2019.
\newblock Capacity bounded differential privacy.
\newblock In \emph{NeurIPS}, volume~32.

\bibitem[{Chen, Vikalo et~al.(2023)}]{chen2023best}
Chen, H.; Vikalo, H.; et~al. 2023.
\newblock The Best of Both Worlds: Accurate Global and Personalized Models through Federated Learning with Data-Free Hyper-Knowledge Distillation.
\newblock \emph{ICLR}.

\bibitem[{Chen and Chao(2020)}]{Chen2020FedBEMB}
Chen, H.-Y.; and Chao, W.-L. 2020.
\newblock FedBE: Making Bayesian Model Ensemble Applicable to Federated Learning.
\newblock In \emph{ICLR}.

\bibitem[{Chen and Guestrin(2016)}]{chen2016xgboost}
Chen, T.; and Guestrin, C. 2016.
\newblock Xgboost: A scalable tree boosting system.
\newblock In \emph{KDD}, 785--794.

\bibitem[{Chittora et~al.(2021)Chittora, Chaurasia, Chakrabarti, Kumawat, Chakrabarti, Leonowicz, Jasinski, Jasinski, Gono, Jasinska, and Bolshev}]{Chittora2021}
Chittora, P.; Chaurasia, S.; Chakrabarti, P.; Kumawat, G.; Chakrabarti, T.; Leonowicz, Z.; Jasinski, M.; Jasinski, L.; Gono, R.; Jasinska, E.; and Bolshev, V. 2021.
\newblock Prediction of Chronic Kidney Disease -A Machine Learning perspective.
\newblock \emph{IEEE Access}.

\bibitem[{Cho et~al.(2022)Cho, Manoel, Joshi, Sim, and Dimitriadis}]{cho2022heterogeneous}
Cho, Y.~J.; Manoel, A.; Joshi, G.; Sim, R.; and Dimitriadis, D. 2022.
\newblock Heterogeneous ensemble knowledge transfer for training large models in federated learning.
\newblock \emph{IJCAI}.

\bibitem[{Cho et~al.(2023)Cho, Wang, Chirvolu, and Joshi}]{cho2023communication}
Cho, Y.~J.; Wang, J.; Chirvolu, T.; and Joshi, G. 2023.
\newblock Communication-Efficient and Model-Heterogeneous Personalized Federated Learning via Clustered Knowledge Transfer.
\newblock \emph{IEEE Journal of Selected Topics in Signal Processing}, 17(1): 234--247.

\bibitem[{Cortez et~al.(2009)Cortez, Cerdeira, Almeida, Matos, and Reis}]{cortez2009modeling}
Cortez, P.; Cerdeira, A.; Almeida, F.; Matos, T.; and Reis, J. 2009.
\newblock Modeling wine preferences by data mining from physicochemical properties.
\newblock \emph{Decision support systems}, 47(4): 547--553.

\bibitem[{Dekel et~al.(2012)Dekel, Gilad-Bachrach, Shamir, and Xiao}]{dekel2012optimal}
Dekel, O.; Gilad-Bachrach, R.; Shamir, O.; and Xiao, L. 2012.
\newblock Optimal Distributed Online Prediction Using Mini-Batches.
\newblock \emph{Journal of Machine Learning Research}, 13(1).

\bibitem[{Diao, Ding, and Tarokh(2022)}]{diao2022semifl}
Diao, E.; Ding, J.; and Tarokh, V. 2022.
\newblock SemiFL: Semi-supervised federated learning for unlabeled clients with alternate training.
\newblock \emph{NeurIPS}, 35: 17871--17884.

\bibitem[{D'Oliveira, M{\'e}dard, and Sadeghi(2021)}]{d2021differential}
D'Oliveira, R.~G.; M{\'e}dard, M.; and Sadeghi, P. 2021.
\newblock Differential privacy for binary functions via randomized graph colorings.
\newblock In \emph{IEEE ISIT}, 473--478. IEEE.

\bibitem[{Dwork et~al.(2006)Dwork, McSherry, Nissim, and Smith}]{dwork2006calibrating}
Dwork, C.; McSherry, F.; Nissim, K.; and Smith, A. 2006.
\newblock Calibrating noise to sensitivity in private data analysis.
\newblock In \emph{TCC}, 265--284. Springer.

\bibitem[{Dwork, Roth et~al.(2014)}]{dwork2014algorithmic}
Dwork, C.; Roth, A.; et~al. 2014.
\newblock The algorithmic foundations of differential privacy.
\newblock \emph{Foundations and Trends{\textregistered} in Theoretical Computer Science}, 9(3--4): 211--407.

\bibitem[{Editor(2019)}]{HIPAAJournal2019}
Editor, H.~J. 2019.
\newblock Healthcare Data Breach Statistics.
\newblock \emph{HIPAA Journal}.

\bibitem[{El~Emam, Mosquera, and Hoptroff(2020)}]{el2020practical}
El~Emam, K.; Mosquera, L.; and Hoptroff, R. 2020.
\newblock \emph{Practical synthetic data generation: balancing privacy and the broad availability of data}.
\newblock O'Reilly Media.

\bibitem[{{European Parliament} and {Council of the European Union}(2016)}]{GDPR2016}
{European Parliament}; and {Council of the European Union}. 2016.
\newblock EU general data protection regulation (GDPR).
\newblock \emph{Official Journal of the European Union}, 2014.

\bibitem[{Fenton et~al.(2013)Fenton, Xing, Elmore, Bang, Chen, Lindfors, and Baldwin}]{Fenton2013}
Fenton, J.~J.; Xing, G.; Elmore, J.~G.; Bang, H.; Chen, S.~L.; Lindfors, K.~K.; and Baldwin, L.~M. 2013.
\newblock Short-term outcomes of screening mammography using computer-aided detection a population-based study of medicare enrollees.
\newblock \emph{Annals of Internal Medicine}, 158.

\bibitem[{Friedman(2001)}]{friedman2001greedy}
Friedman, J.~H. 2001.
\newblock Greedy function approximation: a gradient boosting machine.
\newblock \emph{Annals of statistics}, 1189--1232.

\bibitem[{Friedman and Popescu(2008)}]{friedman2008predictive}
Friedman, J.~H.; and Popescu, B.~E. 2008.
\newblock Predictive learning via rule ensembles.
\newblock \emph{The annals of applied statistics}, 916--954.

\bibitem[{Geyer, Klein, and Nabi(2017)}]{geyer2017differentially}
Geyer, R.~C.; Klein, T.; and Nabi, M. 2017.
\newblock Differentially private federated learning: A client level perspective.
\newblock \emph{arXiv preprint arXiv:1712.07557}.

\bibitem[{Ghazi et~al.(2021)Ghazi, Golowich, Kumar, Manurangsi, and Zhang}]{ghazi2021deep}
Ghazi, B.; Golowich, N.; Kumar, R.; Manurangsi, P.; and Zhang, C. 2021.
\newblock Deep learning with label differential privacy.
\newblock \emph{NeurIPS}, 34: 27131--27145.

\bibitem[{Gong et~al.(2022)Gong, Sharma, Karanam, Wu, Chen, Doermann, and Innanje}]{gong2022preserving}
Gong, X.; Sharma, A.; Karanam, S.; Wu, Z.; Chen, T.; Doermann, D.; and Innanje, A. 2022.
\newblock Preserving privacy in federated learning with ensemble cross-domain knowledge distillation.
\newblock In \emph{AAAI}, volume~36, 11891--11899.

\bibitem[{Hardt, Recht, and Singer(2016)}]{hardt2016train}
Hardt, M.; Recht, B.; and Singer, Y. 2016.
\newblock Train faster, generalize better: Stability of stochastic gradient descent.
\newblock In \emph{ICML}, 1225--1234. PMLR.

\bibitem[{Itahara et~al.(2021)Itahara, Nishio, Koda, Morikura, and Yamamoto}]{itahara2021distillation}
Itahara, S.; Nishio, T.; Koda, Y.; Morikura, M.; and Yamamoto, K. 2021.
\newblock Distillation-based semi-supervised federated learning for communication-efficient collaborative training with non-iid private data.
\newblock \emph{IEEE Transactions on Mobile Computing}, 22(1): 191--205.

\bibitem[{Ji et~al.(2021)Ji, Li, Yilmaz, Ayday, Ye, and Sun}]{ji2021differentially}
Ji, T.; Li, P.; Yilmaz, E.; Ayday, E.; Ye, Y.; and Sun, J. 2021.
\newblock Differentially private binary-and matrix-valued data query: an XOR mechanism.
\newblock \emph{VLDB}, 14(5): 849--862.

\bibitem[{Jiang, Shan, and Zhang(2020)}]{jiang2020federated}
Jiang, D.; Shan, C.; and Zhang, Z. 2020.
\newblock Federated learning algorithm based on knowledge distillation.
\newblock In \emph{ICAICE}, 163--167. IEEE.

\bibitem[{Johnson, Pollard, and Mark(2016)}]{johnson2016mimic}
Johnson, A.; Pollard, T.; and Mark, R. 2016.
\newblock MIMIC-III clinical database (version 1.4).
\newblock \emph{PhysioNet}, 10: C2XW26.

\bibitem[{kaggle()}]{TwitterSentimentAnalysis}
kaggle. 2021.
\newblock Twitter Sentiment Analysis.

\bibitem[{Kairouz et~al.(2021)Kairouz, McMahan, Avent, Bellet, Bennis, Bhagoji, Bonawitz, Charles, Cormode, Cummings et~al.}]{kairouz2021advances}
Kairouz, P.; McMahan, H.~B.; Avent, B.; Bellet, A.; Bennis, M.; Bhagoji, A.~N.; Bonawitz, K.; Charles, Z.; Cormode, G.; Cummings, R.; et~al. 2021.
\newblock Advances and open problems in federated learning.
\newblock \emph{Foundations and Trends{\textregistered} in Machine Learning}, 14(1--2): 1--210.

\bibitem[{Kamp(2019)}]{kamp2019black}
Kamp, M. 2019.
\newblock \emph{Black-Box Parallelization for Machine Learning}.
\newblock Ph.D. thesis, Rheinische Friedrich-Wilhelms-Universit{\"a}t Bonn, Universit{\"a}ts-und Landesbibliothek Bonn.

\bibitem[{Kamp et~al.(2019)Kamp, Adilova, Sicking, H{\"u}ger, Schlicht, Wirtz, and Wrobel}]{kamp2019efficient}
Kamp, M.; Adilova, L.; Sicking, J.; H{\"u}ger, F.; Schlicht, P.; Wirtz, T.; and Wrobel, S. 2019.
\newblock Efficient decentralized deep learning by dynamic model averaging.
\newblock In \emph{ECMLPKDD}, 393--409. Springer.

\bibitem[{Kamp et~al.(2014)Kamp, Boley, Keren, Schuster, and Sharfman}]{kamp2014communication}
Kamp, M.; Boley, M.; Keren, D.; Schuster, A.; and Sharfman, I. 2014.
\newblock Communication-efficient distributed online prediction by dynamic model synchronization.
\newblock In \emph{ECMLPKDD}, 623--639. Springer.

\bibitem[{Kamp et~al.(2016)Kamp, Bothe, Boley, and Mock}]{kamp2016communication}
Kamp, M.; Bothe, S.; Boley, M.; and Mock, M. 2016.
\newblock Communication-efficient distributed online learning with kernels.
\newblock In \emph{ECMLPKDD}, 805--819. Springer.

\bibitem[{Kamp, Fischer, and Vreeken(2023)}]{kamp2023federated}
Kamp, M.; Fischer, J.; and Vreeken, J. 2023.
\newblock Federated Learning from Small Datasets.
\newblock In \emph{ICLR}.

\bibitem[{Karimireddy et~al.(2020)Karimireddy, Kale, Mohri, Reddi, Stich, and Suresh}]{karimireddy2020scaffold}
Karimireddy, S.~P.; Kale, S.; Mohri, M.; Reddi, S.; Stich, S.; and Suresh, A.~T. 2020.
\newblock Scaffold: Stochastic controlled averaging for federated learning.
\newblock In \emph{ICML}, 5132--5143. PMLR.

\bibitem[{Keen, Keen, and Keen(2018)}]{Keen2018}
Keen, J.~D.; Keen, J.~M.; and Keen, J.~E. 2018.
\newblock Utilization of Computer-Aided Detection for Digital Screening Mammography in the United States, 2008 to 2016.
\newblock \emph{Journal of the American College of Radiology}, 15.

\bibitem[{Kermany et~al.(2018)Kermany, Goldbaum, Cai, Valentim, Liang, Baxter, McKeown, Yang, Wu, Yan et~al.}]{kermany2018identifying}
Kermany, D.~S.; Goldbaum, M.; Cai, W.; Valentim, C.~C.; Liang, H.; Baxter, S.~L.; McKeown, A.; Yang, G.; Wu, X.; Yan, F.; et~al. 2018.
\newblock Identifying medical diagnoses and treatable diseases by image-based deep learning.
\newblock \emph{cell}, 172(5): 1122--1131.

\bibitem[{Khan et~al.(2020)Khan, Naseem, Muhammad, Abbas, and Kim}]{Khan2020}
Khan, B.; Naseem, R.; Muhammad, F.; Abbas, G.; and Kim, S. 2020.
\newblock An empirical evaluation of machine learning techniques for chronic kidney disease prophecy.
\newblock \emph{IEEE Access}, 8.

\bibitem[{Krizhevsky, Nair, and Hinton(2010)}]{krizhevsky2010cifar}
Krizhevsky, A.; Nair, V.; and Hinton, G. 2010.
\newblock Cifar-10 (canadian institute for advanced research).

\bibitem[{Li and Wang(2019)}]{li2019fedmd}
Li, D.; and Wang, J. 2019.
\newblock Fedmd: Heterogenous federated learning via model distillation.
\newblock \emph{arXiv preprint arXiv:1910.03581}.

\bibitem[{Li, He, and Song(2021)}]{li2021model}
Li, Q.; He, B.; and Song, D. 2021.
\newblock Model-contrastive federated learning.
\newblock In \emph{CVPR}, 10713--10722.

\bibitem[{Li et~al.(2020)Li, Sahu, Zaheer, Sanjabi, Talwalkar, and Smith}]{li2020federated}
Li, T.; Sahu, A.~K.; Zaheer, M.; Sanjabi, M.; Talwalkar, A.; and Smith, V. 2020.
\newblock Federated optimization in heterogeneous networks.
\newblock In \emph{Machine learning and systems}, volume~2, 429--450.

\bibitem[{Li and Zhang(2021)}]{li2021membership}
Li, Z.; and Zhang, Y. 2021.
\newblock Membership leakage in label-only exposures.
\newblock In \emph{ACM CCS}, 880--895.

\bibitem[{Lin et~al.(2021)Lin, Lou, Xiong, and Shahabi}]{lin2021semifed}
Lin, H.; Lou, J.; Xiong, L.; and Shahabi, C. 2021.
\newblock Semifed: Semi-supervised federated learning with consistency and pseudo-labeling.
\newblock \emph{arXiv preprint arXiv:2108.09412}.

\bibitem[{Lin et~al.(2020)Lin, Kong, Stich, and Jaggi}]{lin2020ensemble}
Lin, T.; Kong, L.; Stich, S.~U.; and Jaggi, M. 2020.
\newblock Ensemble distillation for robust model fusion in federated learning.
\newblock \emph{NeurIPS}, 33: 2351--2363.

\bibitem[{Ma et~al.(2020)Ma, Li, Ding, Yang, Shu, Quek, and Poor}]{ma2020safeguarding}
Ma, C.; Li, J.; Ding, M.; Yang, H.~H.; Shu, F.; Quek, T.~Q.; and Poor, H.~V. 2020.
\newblock On safeguarding privacy and security in the framework of federated learning.
\newblock \emph{IEEE network}, 34(4): 242--248.

\bibitem[{Maas et~al.(2011)Maas, Daly, Pham, Huang, Ng, and Potts}]{maas-EtAl:2011:ACL-HLT2011}
Maas, A.~L.; Daly, R.~E.; Pham, P.~T.; Huang, D.; Ng, A.~Y.; and Potts, C. 2011.
\newblock Learning Word Vectors for Sentiment Analysis.
\newblock In \emph{ACL HLT}, 142--150. Portland, Oregon, USA: Association for Computational Linguistics.

\bibitem[{Mcdonald et~al.(2009)Mcdonald, Mohri, Silberman, Walker, and Mann}]{mcdonald2009efficient}
Mcdonald, R.; Mohri, M.; Silberman, N.; Walker, D.; and Mann, G. 2009.
\newblock Efficient large-scale distributed training of conditional maximum entropy models.
\newblock \emph{NeurIPS}, 22.

\bibitem[{McMahan et~al.(2017)McMahan, Moore, Ramage, Hampson, and y~Arcas}]{mcmahan2017communication}
McMahan, B.; Moore, E.; Ramage, D.; Hampson, S.; and y~Arcas, B.~A. 2017.
\newblock Communication-efficient learning of deep networks from decentralized data.
\newblock In \emph{Artificial intelligence and statistics}, 1273--1282. PMLR.

\bibitem[{Murakonda and Shokri(2020)}]{murakonda2020ml}
Murakonda, S.~K.; and Shokri, R. 2020.
\newblock Ml privacy meter: Aiding regulatory compliance by quantifying the privacy risks of machine learning.
\newblock \emph{arXiv preprint arXiv:2007.09339}.

\bibitem[{Nagarajan and Kolter(2019)}]{nagarajan2019uniform}
Nagarajan, V.; and Kolter, J.~Z. 2019.
\newblock Uniform convergence may be unable to explain generalization in deep learning.
\newblock \emph{NeurIPS}, 32.

\bibitem[{Netzer et~al.(2011)Netzer, Wang, Coates, Bissacco, Wu, and Ng}]{netzer2011reading}
Netzer, Y.; Wang, T.; Coates, A.; Bissacco, A.; Wu, B.; and Ng, A.~Y. 2011.
\newblock Reading digits in natural images with unsupervised feature learning.

\bibitem[{NIH(2011)}]{tcga}
NIH, N. C.~I. 2011.
\newblock The Cancer Genome Atlas Program (TCGA).

\bibitem[{Papernot et~al.(2017)Papernot, Abadi, Erlingsson, Goodfellow, and Talwar}]{papernot2016semi}
Papernot, N.; Abadi, M.; Erlingsson, U.; Goodfellow, I.; and Talwar, K. 2017.
\newblock Semi-supervised knowledge transfer for deep learning from private training data.
\newblock \emph{ICLR}.

\bibitem[{Petzka et~al.(2021)Petzka, Kamp, Adilova, Sminchisescu, and Boley}]{petzka2021relative}
Petzka, H.; Kamp, M.; Adilova, L.; Sminchisescu, C.; and Boley, M. 2021.
\newblock Relative flatness and generalization.
\newblock \emph{NeurIPS}, 34: 18420--18432.

\bibitem[{Qin et~al.(2020)Qin, Chen, Liu, Liu, Feng, and Chen}]{Qin2020}
Qin, J.; Chen, L.; Liu, Y.; Liu, C.; Feng, C.; and Chen, B. 2020.
\newblock A machine learning methodology for diagnosing chronic kidney disease.
\newblock \emph{IEEE Access}, 8.

\bibitem[{Quinlan(1986)}]{quinlan1986induction}
Quinlan, J.~R. 1986.
\newblock Induction of decision trees.
\newblock \emph{Machine learning}, 1: 81--106.

\bibitem[{Rao et~al.(2007)Rao, Bi, Fung, Salganicoff, Obuchowski, and Naidich}]{Rao2007}
Rao, R.~B.; Bi, J.; Fung, G.; Salganicoff, M.; Obuchowski, N.; and Naidich, D. 2007.
\newblock LungCAD: A clinically approved, machine learning system for lung cancer detection.
\newblock In \emph{KDD}.

\bibitem[{Rasmus et~al.(2015)Rasmus, Berglund, Honkala, Valpola, and Raiko}]{rasmus2015semi}
Rasmus, A.; Berglund, M.; Honkala, M.; Valpola, H.; and Raiko, T. 2015.
\newblock Semi-supervised learning with ladder networks.
\newblock \emph{NeurIPS}, 28.

\bibitem[{Rieke et~al.(2020)Rieke, Hancox, Li, Milletari, Roth, Albarqouni, Bakas, Galtier, Landman, Maier-Hein et~al.}]{rieke2020future}
Rieke, N.; Hancox, J.; Li, W.; Milletari, F.; Roth, H.~R.; Albarqouni, S.; Bakas, S.; Galtier, M.~N.; Landman, B.~A.; Maier-Hein, K.; et~al. 2020.
\newblock The future of digital health with federated learning.
\newblock \emph{NPJ digital medicine}, 3(1): 119.

\bibitem[{Roth et~al.(2019)Roth, Noble, Falk, and Haeberlen}]{roth2019honeycrisp}
Roth, E.; Noble, D.; Falk, B.~H.; and Haeberlen, A. 2019.
\newblock Honeycrisp: large-scale differentially private aggregation without a trusted core.
\newblock In \emph{ACM SOSP}, 196--210.

\bibitem[{Rubinstein and Ald{\`a}(2017)}]{rubinstein2017pain}
Rubinstein, B.~I.; and Ald{\`a}, F. 2017.
\newblock Pain-free random differential privacy with sensitivity sampling.
\newblock In \emph{ICML}, 2950--2959. PMLR.

\bibitem[{Seh et~al.(2020)Seh, Zarour, Alenezi, Sarkar, Agrawal, Kumar, and Khan}]{Seh2020}
Seh, A.~H.; Zarour, M.; Alenezi, M.; Sarkar, A.~K.; Agrawal, A.; Kumar, R.; and Khan, R.~A. 2020.
\newblock Healthcare data breaches: Insights and implications.
\newblock \emph{Healthcare (Switzerland)}, 8.

\bibitem[{Shalev-Shwartz and Ben-David(2014)}]{shalev2014understanding}
Shalev-Shwartz, S.; and Ben-David, S. 2014.
\newblock \emph{Understanding machine learning: From theory to algorithms}.
\newblock Cambridge university press.

\bibitem[{Shalev-Shwartz et~al.(2010)Shalev-Shwartz, Shamir, Srebro, and Sridharan}]{shalev2010learnability}
Shalev-Shwartz, S.; Shamir, O.; Srebro, N.; and Sridharan, K. 2010.
\newblock Learnability, stability and uniform convergence.
\newblock \emph{The Journal of Machine Learning Research}, 11: 2635--2670.

\bibitem[{Shokri et~al.(2017)Shokri, Stronati, Song, and Shmatikov}]{shokri2017membership}
Shokri, R.; Stronati, M.; Song, C.; and Shmatikov, V. 2017.
\newblock Membership inference attacks against machine learning models.
\newblock In \emph{2017 IEEE symposium on security and privacy (SP)}, 3--18. IEEE.

\bibitem[{Short(2023)}]{short2023binomial}
Short, M. 2023.
\newblock On binomial quantile and proportion bounds: With applications in engineering and informatics.
\newblock \emph{Communications in Statistics-Theory and Methods}, 52(12): 4183--4199.

\bibitem[{Sree and Ramesh(2020)}]{Sree2020}
Sree, S.~R.; and Ramesh, S.~N. 2020.
\newblock Artificial intelligence aided diagnosis of Chronic Kidney Disease.
\newblock \emph{Journal of Critical Reviews}, 7.

\bibitem[{Street, Wolberg, and Mangasarian(1993)}]{street1993nuclear}
Street, W.~N.; Wolberg, W.~H.; and Mangasarian, O.~L. 1993.
\newblock Nuclear feature extraction for breast tumor diagnosis.
\newblock In \emph{Biomedical image processing and biomedical visualization}, volume 1905, 861--870. SPIE.

\bibitem[{Struppek, Hintersdorf, and Kersting(2024)}]{struppek2023careful}
Struppek, L.; Hintersdorf, D.; and Kersting, K. 2024.
\newblock Be Careful What You Smooth For: Label Smoothing Can Be a Privacy Shield but Also a Catalyst for Model Inversion Attacks.
\newblock \emph{ICLR}.

\bibitem[{Subramanyan et~al.(2017)Subramanyan, Sinha, Lebedev, Devadas, and Seshia}]{subramanyan2017formal}
Subramanyan, P.; Sinha, R.; Lebedev, I.; Devadas, S.; and Seshia, S.~A. 2017.
\newblock A formal foundation for secure remote execution of enclaves.
\newblock In \emph{ACM CCS}, 2435--2450.

\bibitem[{Sudlow et~al.(2015)Sudlow, Gallacher, Allen, Beral, Burton, Danesh, Downey, Elliott, Green, Landray et~al.}]{sudlow2015uk}
Sudlow, C.; Gallacher, J.; Allen, N.; Beral, V.; Burton, P.; Danesh, J.; Downey, P.; Elliott, P.; Green, J.; Landray, M.; et~al. 2015.
\newblock UK biobank: an open access resource for identifying the causes of a wide range of complex diseases of middle and old age.
\newblock \emph{PLoS medicine}, 12(3): e1001779.

\bibitem[{Sun et~al.(2019)Sun, Kairouz, Suresh, and McMahan}]{sun2019can}
Sun, Z.; Kairouz, P.; Suresh, A.~T.; and McMahan, H.~B. 2019.
\newblock Can you really backdoor federated learning?
\newblock \emph{arXiv preprint arXiv:1911.07963}.

\bibitem[{{The Office for Civil Rights (OCR) and Bradley Malin}(2012)}]{HIPAAOCR2012}
{The Office for Civil Rights (OCR) and Bradley Malin}. 2012.
\newblock Guidance Regarding Methods for de-identification of protected health information in accordance with the Health Insurance Portability and Accountability Act (HIPAA) Privacy Rule.
\newblock \emph{Health Information Privacy}.

\bibitem[{Truex et~al.(2019)Truex, Baracaldo, Anwar, Steinke, Ludwig, Zhang, and Zhou}]{truex2019hybrid}
Truex, S.; Baracaldo, N.; Anwar, A.; Steinke, T.; Ludwig, H.; Zhang, R.; and Zhou, Y. 2019.
\newblock A hybrid approach to privacy-preserving federated learning.
\newblock In \emph{ACM workshop on artificial intelligence and security}, 1--11.

\bibitem[{Ullrich et~al.(2017)Ullrich, Kamp, G{\"a}rtner, Vogt, and Wrobel}]{ullrich2017co}
Ullrich, K.; Kamp, M.; G{\"a}rtner, T.; Vogt, M.; and Wrobel, S. 2017.
\newblock Co-regularised support vector regression.
\newblock In \emph{ECMLPKDD}, 338--354. Springer.

\bibitem[{Warfield, Zou, and Wells(2004)}]{warfield2004simultaneous}
Warfield, S.~K.; Zou, K.~H.; and Wells, W.~M. 2004.
\newblock Simultaneous truth and performance level estimation (STAPLE): an algorithm for the validation of image segmentation.
\newblock \emph{IEEE transactions on medical imaging}, 23(7): 903--921.

\bibitem[{Wei et~al.(2020)Wei, Li, Ding, Ma, Yang, Farokhi, Jin, Quek, and Poor}]{wei2020federated}
Wei, K.; Li, J.; Ding, M.; Ma, C.; Yang, H.~H.; Farokhi, F.; Jin, S.; Quek, T.~Q.; and Poor, H.~V. 2020.
\newblock Federated learning with differential privacy: Algorithms and performance analysis.
\newblock \emph{IEEE Transactions on Information Forensics and Security}, 15: 3454--3469.

\bibitem[{Xiao, Rasul, and Vollgraf(2017)}]{xiao2017/online}
Xiao, H.; Rasul, K.; and Vollgraf, R. 2017.
\newblock Fashion-MNIST: a Novel Image Dataset for Benchmarking Machine Learning Algorithms.
\newblock arXiv:cs.LG/1708.07747.

\bibitem[{Xiao, Wan, and Devadas(2022)}]{xiao2022differentially}
Xiao, H.; Wan, J.; and Devadas, S. 2022.
\newblock Differentially Private Deep Learning with ModelMix.
\newblock \emph{arXiv preprint arXiv:2210.03843}.

\bibitem[{Zhao et~al.(2018)Zhao, Li, Lai, Suda, Civin, and Chandra}]{zhao2018federated}
Zhao, Y.; Li, M.; Lai, L.; Suda, N.; Civin, D.; and Chandra, V. 2018.
\newblock Federated learning with non-iid data.
\newblock \emph{arXiv preprint arXiv:1806.00582}.

\bibitem[{Zhou and Li(2005)}]{zhou2005tri}
Zhou, Z.-H.; and Li, M. 2005.
\newblock Tri-training: Exploiting unlabeled data using three classifiers.
\newblock \emph{IEEE Transactions on knowledge and Data Engineering}, 17(11): 1529--1541.

\bibitem[{Zhu and Han(2020)}]{zhu2020deep}
Zhu, L.; and Han, S. 2020.
\newblock Deep leakage from gradients.
\newblock In \emph{Federated learning}, 17--31. Springer.

\bibitem[{Zhu, Hong, and Zhou(2021)}]{zhu2021data}
Zhu, Z.; Hong, J.; and Zhou, J. 2021.
\newblock Data-free knowledge distillation for heterogeneous federated learning.
\newblock In \emph{ICML}, 12878--12889. PMLR.

\bibitem[{Ziller et~al.(2021)Ziller, Usynin, Braren, Makowski, Rueckert, and Kaissis}]{ziller2021medical}
Ziller, A.; Usynin, D.; Braren, R.; Makowski, M.; Rueckert, D.; and Kaissis, G. 2021.
\newblock Medical imaging deep learning with differential privacy.
\newblock \emph{Scientific Reports}, 11(1): 13524.

\end{thebibliography}

\newpage
\appendix
\onecolumn

\makeatletter
\@addtoreset{subsection}{section} 
\renewcommand{\thesubsection}{\thesection.\arabic{subsection}} 
\makeatother


\section*{Supplementary Material}
In this Appendix, we provide supplementary information  
\begin{itemize}
    \item Proof of convergence ( Proposition 1 ) of \fedct in App.~\ref{app:convergence},
    \item Proof for Proposition 2, which addresses the sensitivity analysis of \fedct in App.~\ref{app:sensitivity},
    \item Discussion of Additional Related work in App \ref{app:smifed:discussion},
    \item Additional Empirical Evaluation in App.~\ref{app:extraexp},
    \begin{itemize}
    \item Mixed Model Types: Evaluation of \fedct with different types of models (e.g., neural networks, decision trees) across clients.
    \item Fine-Tuning Large Language Models (GPT-2): Experimental results on fine-tuning large language models in a federated setting, highlighting the model's performance and generalization abilities.
    \item Detailed Comparison to PATE: A thorough comparison with the Private Aggregation of Teacher Ensembles (PATE) framework.
    \item Comparison to FedMD: Empirical results comparing our method with the FedMD approach, emphasizing the advantages in heterogeneous data scenarios.
    \item Scalability Results
    \item Detailed Analysis of Heterogeneous Data Distributions: Exploration of the algorithm's robustness and performance under varying degrees of data heterogeneity among clients.
    \item Ablation Study
    \item A Note on the Byzantine Resilience of FEDCT.
\end{itemize}
    \item Details on the experimental setup in App.~\ref{app:exp:details},
    \item Practical impact of Federated Co-training approach in App \ref{sec:Impact}
\end{itemize}

\section{Proof of Proposition~\ref{prop:convergence}}
\label{app:convergence}
For convenience, we restate the proposition.
\convergence*
\begin{proof}
Let $P_t$ denote the consensus label at time $t\in\NN$. We first show that the probability $\delta_{t}$ of $P_{t}\neq P_{t-1}$ is bounded. Since the learning algorithm $\algo$ at time $t\geq t_0$ achieves a training accuracy $a_t\geq 0.5$, the probability can be determined via the CDF of the binomial distribution, i.e.,
\begin{equation*}
    \begin{split}
        \delta_t=&\Prob{}{\exists u\in U:\sum_{i=1}^m\mathds{1}_{h^i_t(u) = v}<\left\lfloor\frac{m}{2}\right\rfloor}\\
        =&F\left(\left\lfloor\frac{m}{2}\right\rfloor-1,m,a_t\right)=\sum_{i=1}^{\left\lfloor\frac{m}{2}\right\rfloor-1}{\binom{m}{i}}a_t^i(1-a_t)^{m-i}\enspace ,
    \end{split}
\end{equation*}
where $v$ is the consensus label at $t-1$. Note that this is a worst-case bound, since for $C>2$ a majority can be achieved already with $\lfloor m/C\rfloor$ many votes. However, for any number of classes it is guaranteed that $m/2+1$ votes is the majority. Applying the Chernoff bound and denoting by $D(\cdot\|\cdot)$ the Kullback-Leibler divergence yields
\begin{equation*}
    \begin{split}
        \delta_t \leq& \exp\left(-mD\left(\frac{\left\lfloor\frac{m}{2}\right\rfloor-1}{m}\: \middle\|\: a_t\right)^2\right)\\
        =&\exp\left(-m\left(\frac{\left\lfloor\frac{m}{2}\right\rfloor-1}{m}\log\frac{\frac{\left\lfloor\frac{m}{2}\right\rfloor-1}{m}}{a_t}+\left(1-\frac{\left\lfloor\frac{m}{2}\right\rfloor-1}{m}\right)\log\frac{1-\frac{\left\lfloor\frac{m}{2}\right\rfloor-1}{m}}{1-a_t}\right)\right)\\
        \leq&\exp\left(-m\left(\frac{\frac{m}{2}}{m}\log\frac{\frac{\frac{m}{2}}{m}}{a_t}+\left(1-\frac{\frac{m}{2}}{m}\right)\log\frac{1-\frac{\frac{m}{2}}{m}}{1-a_t}\right)\right)\\
        =&\exp\left(-m\left(\frac12\log\frac{\frac12}{a_t}+\frac12\log\frac{\frac12}{1-a_t}\right)\right)=\exp\left(-\frac{m}2\log\frac{1}{2a_t}-\frac{m}2\log\frac{1}{2(1-a_t)}\right)\\
        =&\exp\left(\frac{m}2\left(\log 2a_t + \log 2(1-a_t)\right)\right)=\left(2a_t\right)^{\frac{m}2}\left(2(1-a_t\right)^{\frac{m}2}=4^{\frac{m}2}a_t^{\frac{m}2}(1-a_t)^{\frac{m}2}\enspace .
    \end{split}
\end{equation*}
The union bound over all $u\in U$ yields 
\[
\delta_t\leq |U|4^{\frac{m}2}a_t^{\frac{m}2}(1-a_t)^{\frac{m}2}\enspace .
\]
To show convergence, we need to show that for $t_0\in\NN$ it holds that 
\[
\sum_{t=t_0}^\infty \delta_t \leq \delta
\]
for $0\leq \delta < 1$. 
Since we assume that $a_t$ grows linearly, we can write wlog. $a_t=1-c/t$ for some $c\in\RR_+$ and $t\geq 2c$. With this, the sum can be written as 
\begin{equation*}
    \begin{split}
        \sum_{t=t_0}^\infty\delta_t \leq& |U|\sum_{t=t_0}^\infty 4^{\frac{m}2}\left(1-\frac{c}{t}\right)^{\frac{m}2}\left(\frac{c}{t}\right)^{\frac{m}2}=|U|4^{\frac{m}2}\sum_{t=t_0}^\infty \left(\frac{\frac{t}{c}-1}{\frac{t^2}{c^2}}\right)^{\frac{m}2}\\
        \leq&|U|4^{\frac{m}2}\sum_{t=t_0}^\infty \left(\frac{\frac{t}{c}}{\frac{t^2}{c^2}}\right)^{\frac{m}2}=(4c)^{\frac{m}2}\sum_{t=t_0}^\infty \left(\frac{1}{t}\right)^{\frac{m}2}=|U|(4c)^{\frac{m}2}\zeta\left(\frac{m}{2}\right)-H_{t_0}^{\left(\frac{m}{2}\right)}\enspace ,
        \enspace ,
    \end{split}
\end{equation*}
where $\zeta(x)$ is the Riemann zeta function and $H_n^{(x)}$ is the generalized harmonic number. Note that $H_n^{(x)}=\zeta(x)-\zeta(x,n+1)$, where $\zeta(x,q)$ is the Hurwitz zeta function, so that this expression can be simplified to
\[
\sum_{t=t_0}^\infty\delta_t \leq |U|(4c)^{\frac{m}2}\zeta\left(\frac{m}{2}\right)-\zeta\left(\frac{m}{2}\right)+\zeta\left(\frac{m}{2},t_0+1\right)=|U|(4c)^{\frac{m}2}\zeta\left(\frac{m}{2},t_0+1\right)\enspace .
\]
\end{proof}

Note that $\delta\rightarrow 0$ for $t_0\rightarrow \infty$, and $\delta$ is monotonically decreasing with $m$. To illustrate this, we plotted $\delta$ wrt. $t_0$ in Fig.~\ref{fig:convergence:illustration}: for moderate numbers of clients 
($m\geq 50$) we obtain convergence with probability $\approx1.0$ at $t_0=1000$ (for $m=50$ and $m=100$ with $c\in\{1,2,10\}$). For cross-silo scenarios ($m=5$) it depends on how fast the local accuracy increases: $\delta=0.9962$ for $c=1$, but $\delta=0.7868$ for $c=10$.

\begin{figure}[h!]
\centering
    \includegraphics[width=0.55\linewidth]{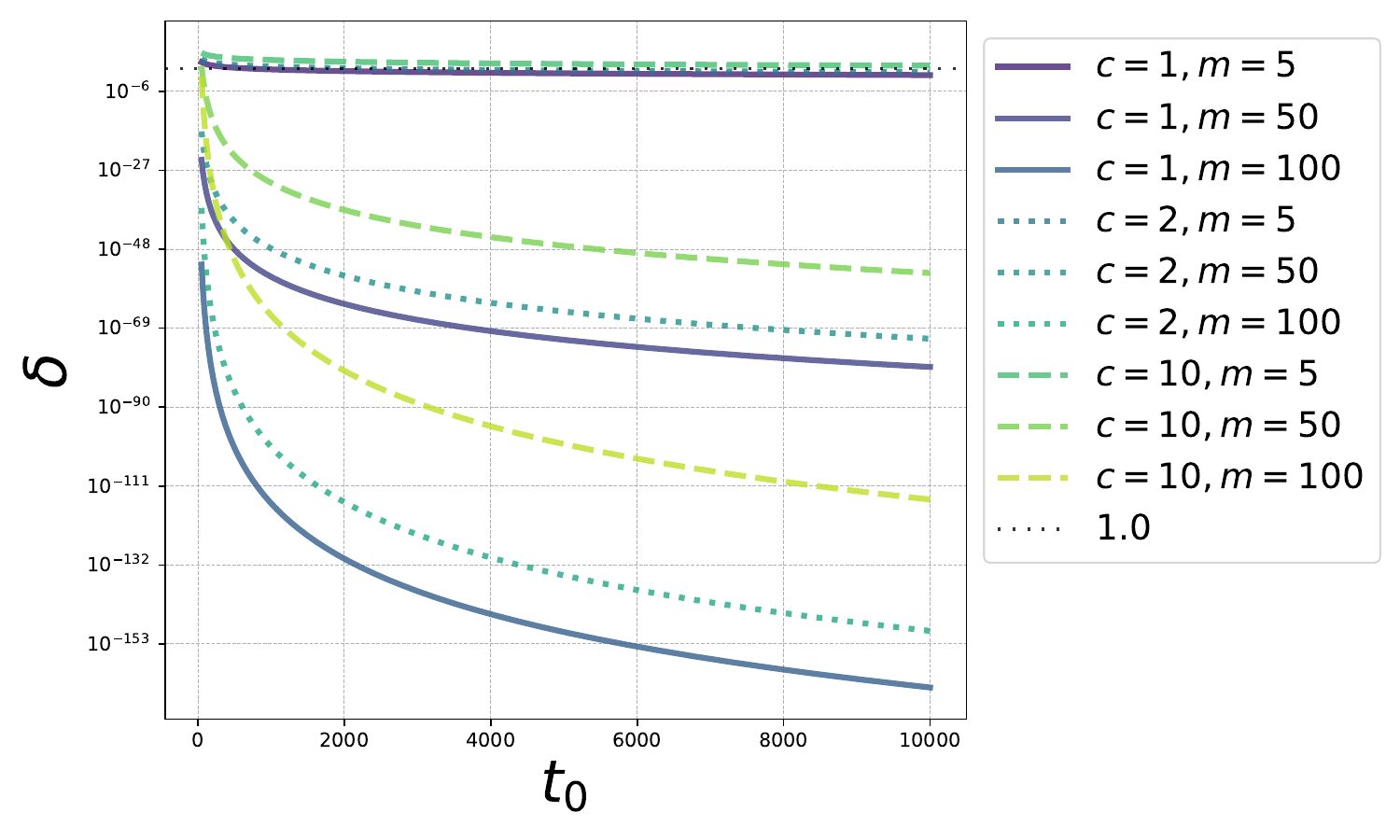}
    \caption{Numerical evaluation of Prop.~\ref{prop:convergence} for $|U|=10^4$.}
    \vspace{-0.5cm}
    \label{fig:convergence:illustration}
\end{figure}

\section{Proof of Proposition~\ref{prop:sensitivity}}
\label{app:sensitivity}
For convenience, we first recall the definition of on-average-replace-one stable. For that, we use the following notation. For a set $S=\{z_1,\dots,z_n\}$ and an additional element $z'$ we denote by $S^{(i)}$ the set where the $i$-th element has been replaced by $z'$, i.e., $S^{(i)}=\{z_1,\dots,z_{i-1},z',z_{i+1},\dots,z_n\}$.
\begin{definition}[\citep{shalev2014understanding}] (On-Average-Replace-One-Stable) Let $\epsilon: \mathbb{N} \rightarrow \mathbb{R}$ be monotonically decreasing, and $\ell$ a loss function, then a learning algorithm $\algo$ is on-average-replace-one-stable with rate $\epsilon(n)$ if for every distribution $\mathcal{D}$
\[
\underset{\stackrel{\left(S, z^{\prime}\right) \sim \mathcal{D}^{n+1}}{i \sim U(n)}}{\mathbb{E}}\left[\ell\left(\algo\left(S^{(i)}\right), z_i\right)-\ell\left(\algo(S), z_i\right)\right] \leq \epsilon(n)\enspace .
\]
\end{definition}
We furthermore restate the proposition.
\sensitivity*

\begin{proof}
    The sensitivity $s_*$ is defined as the supremum of the Frobenius norm of the symmetric difference between the predictions on the unlabeled dataset $U$ for two models $h_s$ and $h_s^{\prime}$ trained on datasets $s$ and $s^{\prime}$ that differ by one instance.
    \[
    s_*=\sup _{S, S^{\prime}}\left\|h_S(U) \Delta h_{S^{\prime}}(U)\right\|_F
    \]
    Since $\algo$ is on-average-replace-one stable with rate $\epsilon$ for $\ell$ and $\ell$ upper bounds the $0-1$-loss, $\algo$ is on-average-replace-one stable with rate at most $\epsilon$ for the $0-1$-loss. Thus, the expected change in loss on a single element of the training set is bounded by $\epsilon(|D\cup U|)$. Since the $0-1$-loss is either $0$ or $1$, this can be interpreted as a success probability in a Bernoulli process. The expected number of differences on the unlabeled dataset then is the expected value of the corresponding binomial distribution, i.e., $|U|\epsilon(|D\cup U|)\leq |U|\epsilon(|U|)$. We are interested in the maximum number of successes such that the cumulative distribution function of the binomial distribution is smaller than $1-\delta$. 
    This threshold $k$ can be found using the quantile function (inverse CDF) for which, however, no closed form exists. \citet{short2023binomial} has shown that the quantile function $Q(n,p,R)$ can be bounded by
    \[
        Q(n,p,R) \leq \left\lceil np + \Phi^{-1}(R)\sqrt{np(1-p)} + \frac{\Phi^{-1}(R)^2}{3}\right\rceil \enspace ,
    \]
    where $\Phi^{-1}$ is the probit function (inverse of standard normal's cdf).
    With $n=|u|$, $p=\epsilon(|U|)$, and $R=1-\delta$, the number of differences in predictions on the unlabeled dataset, i.e., the sensitivity $s_*$, is upper bounded by
    \[
        s_* \leq \left\lceil |U|\epsilon(|U|) + \Phi^{-1}(1-\delta)\sqrt{|U|\epsilon(|U|)(1-\epsilon(|U|))} + \frac{\Phi^{-1}(1-\delta)^2}{3}\right\rceil
    \]
    with probability $1-\delta$.
\end{proof}

 Note that we employed the relaxed version of on-average-replace-one-stability as outlined by \cite{shalev2014understanding}. Notably, a learning problem is learnable if and only if a stable learning rule exists \cite{shalev2010learnability}.While proving on-average-replace-one stability for deep learning remains a significant challenge, it has been successfully established for SGD \cite{hardt2016train}, including cases involving non-smooth loss functions \cite{bassily2020stability}.

\section{Additional Related Work}
\label{app:smifed:discussion} 
\begin{table}[ht]
    \centering
    \caption{Comparison of Different Federated Learning Methods}
    \begin{adjustbox}{width=1\textwidth}
    \begin{tabular}{l|c|c|c|c|c}
        \hline
        Method & Shared Information & Allow Heterogeneity & Public Data & Train Non-Gradient Methods & Collaborative Training \\
        \hline
        FedAvg \cite{mcmahan2017communication} & model Parameters & no & none & no& yes \\
        FedHKD \cite{chen2023best} & model parameters, representations, soft labels & no & none & no &yes\\
        FedMD \cite{li2019fedmd} & soft labels & yes & labeled & no&yes \\
        SemiFL \cite{diao2022semifl} & model parameters & no & unlabeled & no&yes \\
        SemiFed \cite{lin2021semifed} & model parameters & no & unlabeled & no &yes\\ 
        FedDF \cite{lin2020ensemble} & model parameters, soft labels & yes & unlabeled & no & yes \\ 
        DD \cite{bistritz2020distributed}& soft labels & yes & unlabeled & no &yes\\
        \pate \cite{papernot2016semi}& hard labels & yes & unlabeled & yes&no \\ 
        \textbf{FedCT (ours)} & \textbf{hard labels }& \textbf{yes} & \textbf{unlabeled} & \textbf{yes} &\textbf{yes} \\
        \hline
    \end{tabular}
    \end{adjustbox}
    \label{app:tab:federated_methods}
\end{table}

The main goal of \aimhi is to improve the privacy of current federated learning approaches while maintaining model quality. For that, we consider a classical FL scenario where clients hold a private local dataset. We additionally assume that they have access to a public unlabeled dataset.

In federated learning, clients share the parameters of local models (or model update / gradients) with a server that aggregates them. Averaging model parameters and gradients has been proposed first for maximum-entropy models~\citet{mcdonald2009efficient} and has been investigated in online learning~\citet{dekel2012optimal, kamp2014communication}. \citet{mcmahan2017communication} proposed averaging gradients and model parameters in deep learning (\fedavg). A variety of methods have improved the performance of \fedavg by altering the local objective function or the aggregation, in particular for heterogeneous data. For example, FedProx~\citep{li2020federated} adds a regularizer to the local loss function that promotes proximity to the last model aggregate, SCAFFOLD~\citep{karimireddy2020scaffold} adds control variables to the local optimization that account for client drift, MOON~\citep{li2021model} which uses a form of self-supervised learning to improve local models via a contrastive loss function, or FedGen~\citep{zhu2021data} which learns a data generator from the shared model parameters that generates additional training data for clients. While these methods improve performance, in particular on heterogeneous data, they require sharing model parameters (and sometimes additional information) and therefore have at least the same privacy drawbacks that \fedavg has.

Semi-supevised federated learning uses an unlabeled public dataset to share (additional) information. We discussed semi-supervised approaches that fit our scenario, i.e., that only share predictions on an unlabeled dataset, in Sec.~\ref{sec:related_work}.
There are, however, semi-supervised federated learning methods that do not directly fit our scenario or do not improve over the baselines we selected. 
For example, Fed-ET~\citep{cho2022heterogeneous},  semiFed~\citep{lin2021semifed}, SemiFL~\citep{diao2022semifl}, FedGen~\citep{zhu2021data}, and FedHKD~\citep{chen2023best} (also) share model parameters and therefore do not improve privacy over \fedavg. \citet{cho2023communication} propose to use co-regularization for personalized federated learning. For non-personalized FL this approach is equivalent to distributed distillation (\dd)~\citep{bistritz2020distributed}. The approach proposed by \citet{itahara2021distillation} is also similar to \dd.
FedMD~\citep{li2019fedmd} uses a public labeled dataset and therefore does not fit our scenario. Since the heterogeneous data setup they proposed is interesting, though, we nonetheless compare it to \fedct in App.~\ref{app:sec:compFedMD}. Similarly, \pate~\citet{papernot2016semi} is not collaborative and therefore also does not directly fit our scenario, as discussed in Sec.~\ref{sec:related_work}. Moreover, its privacy mechanism does not protect against an honest-but-curious server. However, since the approach is similar to \fedct in that it shares hard labels on an unlabeled dataset, we do compare \pate and its differentially private variant \dppate to \fedct in App.~\ref{app:PATE}.

In Tab.~\ref{app:tab:federated_methods} we provide an overview over the methods mentioned that highlights what information is shared by the method, whether it allows for model heterogeneity (i.e., whether clients may use different models), if and what type of public dataset is used, whether the method can train non-gradient-based models, such as decision trees, and whether the method is collaborative (i.e., whether clients learn from each other, or a single model is distilled from clients, instead).

\section{Additional Empirical Evaluation}
\label{app:extraexp}
In this section, we provide further details on our experiments, as well as additional results that further investigate the properties of \fedct. First, we demonstrate \fedct's ability to collaboratively train clients that each use different models, e.g., one using a decision tree, another a neural network, etc. We then provide additional details on our experiment on fine-tuning a large language model with \fedct. We also compare \fedct with two more distantly related baselines, \pate and FedMD: \pate also shares hard labels like \fedct, but is not collaborative, and FedMD shares soft labels and requires a labeled public dataset (see Tab.~\ref{app:tab:federated_methods} in Sec.~\ref{app:smifed:discussion}). To investigate the effect of co-training, we perform an ablation study in which we investigete the impact of the size of the unlabeled dataset on \fedct's performance and compare \fedct's performance to local training. Lastly, we provide an additional result on the scalability of \fedct in terms of the number of clients.

\subsection{Mixed Model Types}
\label{app:mix:model}
\begin{table}[ht!]
\centering
\caption{Mixed model experiment}
\begin{tabular}{c|c|c|c|c|c|c}
\hline
Dataset & C1 & C2 & C3 & C4 & C5 & ACC \\
\hline
BreastCancer & DT & RF & RuleFit & XGBoost & RF & $0.95 \pm 0.001$ \\
BreastCancer & DT & MLP & RuleFit & XGBoost & RF & $0.93 \pm 0.002$ \\
BreastCancer & XGBoost & XGBoost & XGBoost & XGBoost & XGBoost & $0.94 \pm 0.001$ \\
AdultsIncome &XGBoost&MLP&RF&RuleFit&XGBoost&$0.84 \pm 0.001$\\
\hline
\end{tabular}
\label{tab:mixmodel}
\end{table}

Sharing hard labels allows us to train any supervised learning method on each client. This means, we can even use different model types for different clients in \fedct. To demonstrate this, we compare using the best performing interpretable model on every client---for the BreatCancer dataset that is XGBoost---to two heterogeneous ensembles using different combinations of decision trees (DT), random forests (RF), rule ensembles (RuleFit), gradient-boosted decision trees (XGBoost), and neural networks (MLP). The results in Tab.~\ref{tab:mixmodel} show that using a diverse ensemble of models can further improve accuracy. That is, using only XGBoost models on each client results in a test accuracy of $0.94$ on the BreastCancer dataset, whereas using a heterogeneous ensemble yields a slightly higher test accuracy of $0.95$. Note that for this experiment we tried a few random combinations of models for the heterogeneous ensemble. Optimizing the ensemble to further improve accuracy makes for excellent future work.

\begin{figure}[t]
\begin{minipage}[t]{0.48\textwidth}
    \centering
    \includegraphics[width=0.88\linewidth]{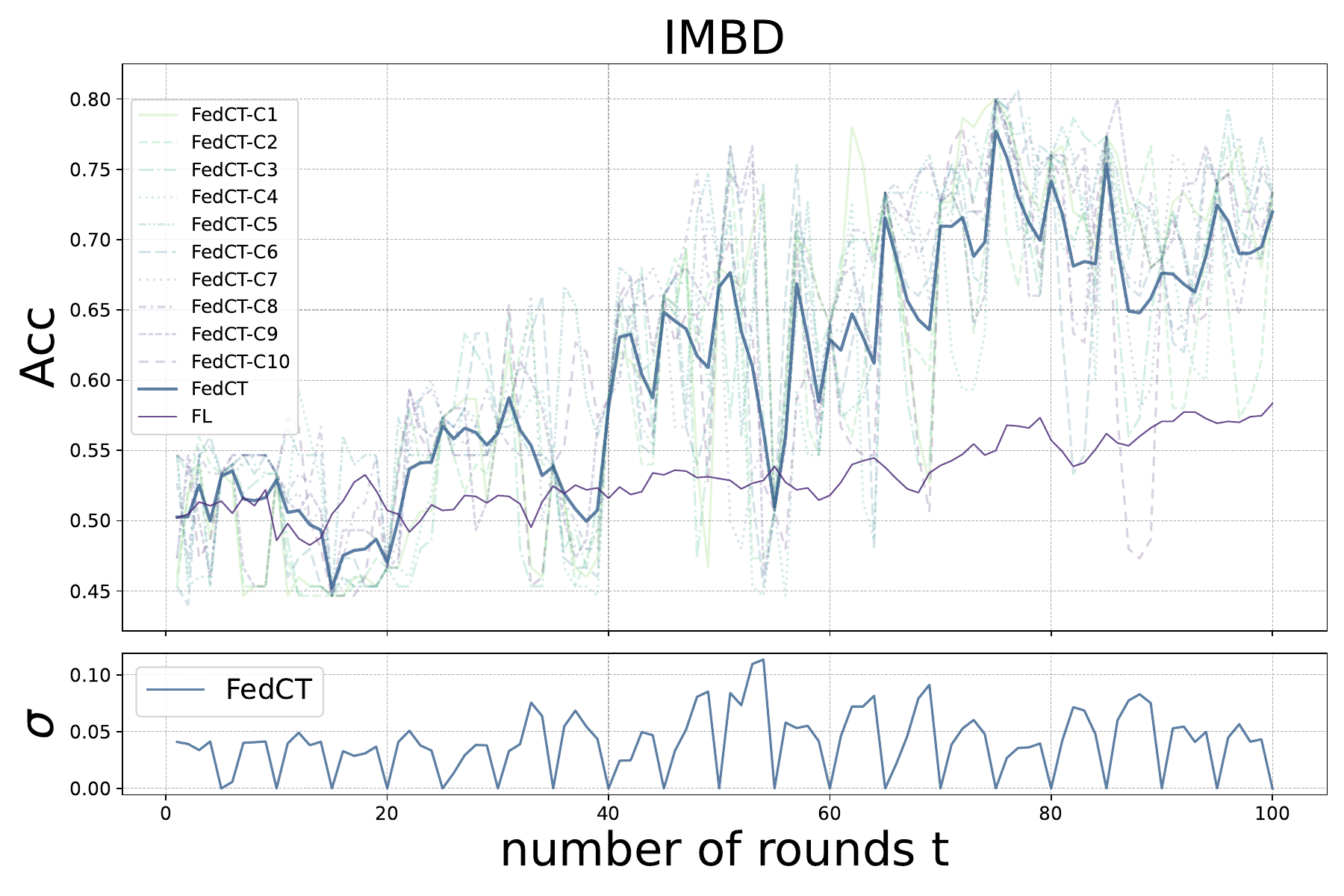}
    \caption{\textbf{Top}: Test accuracy (ACC) over time for GPT2 fine-tuning on IMBD with ACC of FL, and ACC of local models and their average for \fedct.\textbf{Bottom}: Standard deviation of test accuracy of local models in \fedct.}
    \label{fig:llm:IMBD}
\end{minipage}\hfill
\begin{minipage}[t]{0.48\textwidth}
    \centering
    \includegraphics[width=\linewidth]{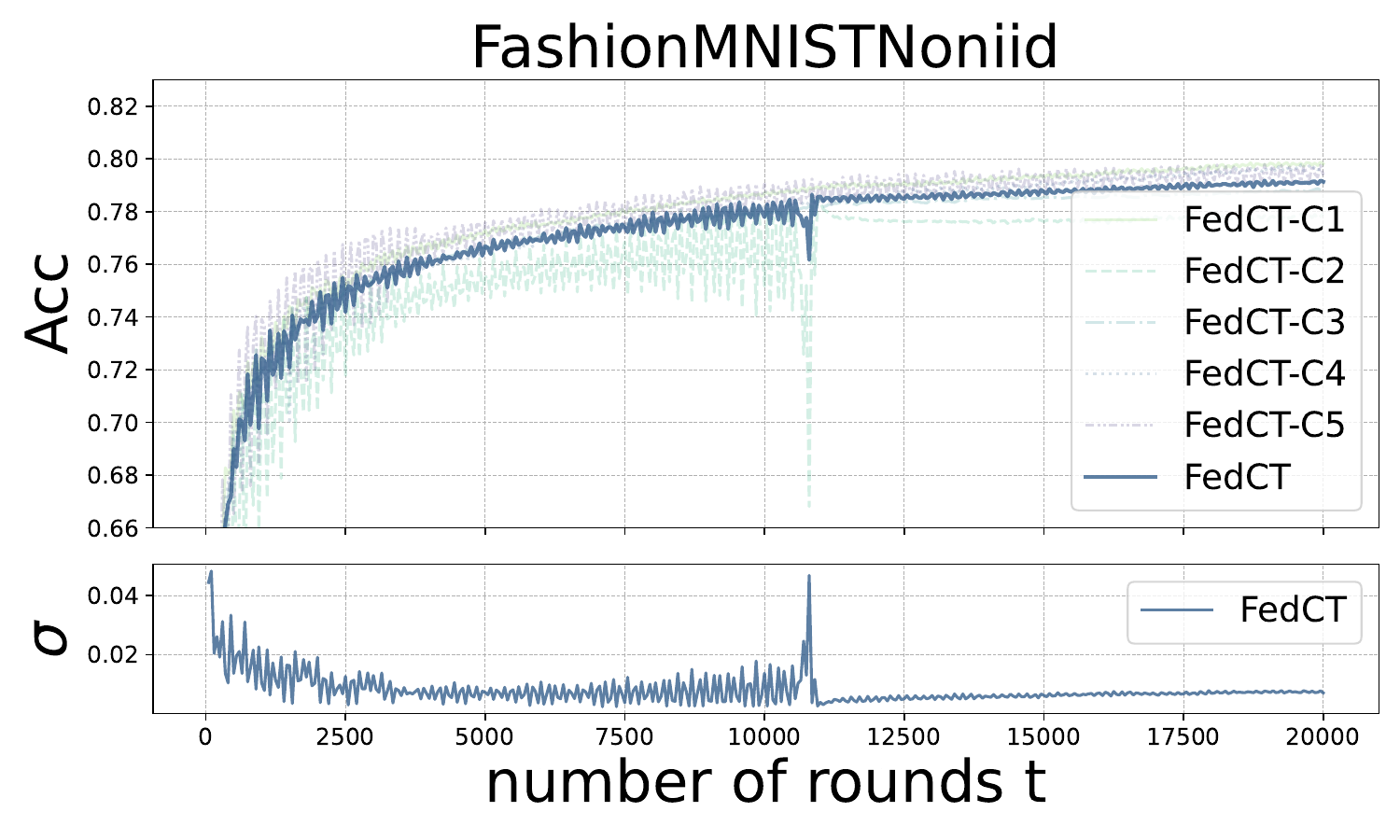}
    \caption{\textbf{Top}: Test accuracy (ACC) over time for $m=5$ local models of \fedct on heterogeneous distribution for the FashionMNIST dataset ($\alpha_{1}=100, \alpha_{2}=0.01$). \textbf{Bottom}: Standard deviation of test accuracy of local models in \fedct.}
    \label{app:fig:convnoniid}
\end{minipage}
\end{figure}

\subsection{Fine-Tuning Large Language Models (GPT2)}
\label{app:llm}

We now provide further details on our experiment on fine-tuning large language models via \fedct. The rational for this experiment was that fine-tuning should be a task particularly suitable to label sharing, since the quality of initial pseudo-labels will already be high, since model quality is good to begin with. In contrast, in standard learning scenarios the initial models will be of poor quality so that the first pseudo-labels will be fairly poor. 

To test whether \fedct performs well in a fine-tuning scenario, we fine-tune the GPT2 model transformer with a sequence classification head (linear layer) comprising of $124.44$ million parameters on the IMDB sentiment classification dataset~\citep{maas-EtAl:2011:ACL-HLT2011} and Twitter Sentiment Analysis dataset ~\citep{TwitterSentimentAnalysis}. For the IMBD dataset, we use $m=10$ clients each with a local dataset of $145$ examples and an unlabeled dataset of $|U|=150$ examples. We use a test set of $150$ examples for evaluation. We run fine-tuning for $T=100$ rounds with a communication period of $b=5$. Here, \fedct achieves a test accuracy of $0.73$, whereas \fedavg in that case only achieves a test accuracy of $0.59$. For Twitter, we used $m=10$ clients, a total of $39,682$ local training examples, an unlabeled dataset of $|U|=35,000$, and a testing set of $1000$ examples. \fedct achieved a testing accuracy of $0.65$, compared to $0.61$ for \fedavg.


We analyze the convergence in terms of test accuracy in Fig.~\ref{fig:llm:IMBD}. It shows that the initial accuracy is indeed already high, both for \fedct and \fedavg. Moreover, both methods exhibit a more linear convergence, in contrast to the saturating convergence behavior typical for standard learning tasks. The variance of individual clients in \fedct is low right from the start, but drops substantially in each communication round and increases again afterwards. This suggests that local datasets are heterogeneous enough to have different local optima, which is likely, given the small training set sizes. This data heterogeneity could explain the suboptimal performance of \fedavg, but leads to a conundrum: \fedct performed worse than \fedavg on strongly heterogeneous data. In the discussion (Sec.~\ref{sec:discussion}), we argued that an explanation why \fedct performs poorly on strongly heterogeneous data is that label quality on average is poor, since some clients have not seen a single example for some classes. If that explanation is correct, it would make sense that \fedct performs well in a fine-tuning setting, even under strong data heterogeneity: the quality of labels will be high.
As mentioned in the discussion, these results are promising and suggest that semi-supervised learning can be beneficial for fine-tuning in a federated setup. 

\subsection{Detailed Comparison to \pate}
\label{app:PATE}

\pate~\citep{papernot2016semi} is a distributed (but not collaborative) distillation algorithm with the goal of producing a single student model from an ensemble of teachers, each teacher trained on a distributed dataset. The approach assumes a secure environment for distributed dataset and server to produce pseudo-labels for a public unlabeled dataset. An untrusted entity can then distill a student model using the pseudo-labeling. This differs from the setting we consider, where the server that aggregates predictions and produces the pseudo-label cannot necessarily be trusted, i.e., we assume the server to be honest-but-curious. Although the setting of \pate fundamentally differs from the setting we consider, comparing \fedct and \pate is interesting: both approaches produce a pseudo-labeling via a majority vote and train models using these pseudo-labels on a public unlabeled dataset. A major difference is that \pate trains teachers until convergence first, then produces the pseudo-labels, and finally trains the student on the pseudo-labeled public dataset alone. In \fedct, clients iteratively train their models on the private training set in conjunction with the pseudo-labeled public dataset and update pseudo-labels in each communication round. Additionally, PATE provides privacy primarily on the server level rather than the client level, as the server aggregates non-private output from the teacher models and applies a privacy mechanism to the aggregate, but does not inherently protect the privacy of the data used by each teacher during the training. Therefore, we expect \fedct to achieve higher model quality than \pate. 

It is noteworthy that \pate also offers a differential privacy mechanism on the shared information---in contrast to \fedct, this shared information is not the local predictions, but rather the consensus labels produced by the server. \pate achieves differential privacy by using a variant of the Gaussian mechanism~\citep{dwork2006calibrating} on the prediction counts for each class, i.e., for each example the prediction counts are the numbers of votes for a particular class from the clients. These counts are integer values, but not binary, and therefore a correctly calibrated Gaussian noise achieves R\'enyi differential privacy (see Thm. 6 on the GNMax Aggregator in~\citep{papernot2016semi}). We compare both \fedct and \dpfedct to \pate and its differentially private variant \dppate. The main comparison is presented in Tab.~\ref{table:iidexp} in Sec.~\ref{sec:experiments}, where they are compared to all other baselines. We summarize the results only for \fedct and \pate in Tab.~\ref{tab:PATE1} for $m=5$  clients/teachers. The results show that, indeed, collaborative training (\fedct) achieves substantially higher test accuracy than distillation (\pate). Surprisingly, sharing hard labels of each client (\fedct) results in a slightly lower vulnerability score (VUL) than sharing the pseudo-labels (\pate). One possible explanation is that in \pate, teachers are trained until full convergence which might lead teachers to overfit the local training data. Therefore, their shared information might reveal more about the training data, since membership attacks tend to be more effective when a model is overfitted \cite{shokri2017membership}. 
We furthermore observe that both \fedct and \pate are robust to their respective differential privacy mechanisms, since both \dpfedct and \dppate achieve a test accuracy similar to their non-differentially-private variants. 

In Tab.~\ref{tab:PATE2} we investigate the scalability of \fedct and \pate by testing both methods on $m=100$ clients on the FashionMNIST and Pneumonia datasets. The results indicate that both methods achieve high accuracy also with a higher distribution of datasets and that the advantage of collaborative training (\fedct) over simple distillation (\pate) remains.

\begin{table}[t]
\centering
\caption{Test acc. (ACC) and privacy vulnerability (VUL, smaller is better) for $5$ clients on iid data.}
\begin{adjustbox}{width=1\textwidth}
\small
\begin{tabular}{c|cc|cc|cc|cc|cc}
\hline 
\textbf{Method} & \multicolumn{2}{c|}{\textbf{CIFAR10}} & \multicolumn{2}{c|}{\textbf{FashionMNIST}} & \multicolumn{2}{c|}{\textbf{Pneumonia}} & \multicolumn{2}{c|}{\textbf{MRI}} & \multicolumn{2}{c}{\textbf{SVHN}} \\
 & ACC & VUL & ACC & VUL & ACC & VUL & ACC & VUL & ACC & VUL \\
\hline
\textbf{\fedct} & $\mathbf{0.77}\pm 0.003$ & $0.52$ & $\mathbf{0.84}\pm 0.004$ & $\mathbf{0.51}$ & $\mathbf{0.78}\pm 0.008$ & $\mathbf{0.51}$ & $0.64\pm 0.004$ & $0.52$ & $\mathbf{0.91}\pm 0.002$ & $\mathbf{0.53}$ \\
\textbf{\dpaimhi} $(\epsilon=0.1)$ & $0.76 \pm 0.002$ & $\mathbf{0.51}$ & $0.80 \pm 0.001$ & $0.52$ & $0.75 \pm 0.004$ & $\mathbf{0.51}$ & $0.62 \pm 0.002$ & $\mathbf{0.51}$ & $0.86\pm 0.001$ & $\mathbf{0.53}$ \\
\textbf{\pate} & $0.69\pm 0.002$ & $0.60$ & $0.73\pm 0.001$ & $0.59$ & $0.75\pm 0.003$ & $0.59$ & $0.61\pm 0.001$ & $0.60$ & $0.87\pm 0.002$ & $0.58$ \\
\textbf{\dppate} & $0.67 \pm 0.003$ & $0.58$ & $0.73 \pm 0.002$ & $0.57$ & $0.71\pm 0.001$ & $0.58$ & $0.60 \pm 0.001$ & $0.57$ & $0.86 \pm 0.002$ & $0.57$ \\
\hline
\end{tabular}
\end{adjustbox}
\vspace{0.1cm}
\label{tab:PATE1}
\end{table}

\begin{table}[ht]
\centering
\caption{Test accuracy ACC of \pate Vs \aimhi for $m=100$ clients}
\begin{tabular}{c|c|c}
\hline
\textbf{Dataset}&\textbf{\fedct}&\textbf{\pate} \\
\hline
FashionMNIST & 0.7154 & 0.6318 \\
Pneumonia & 0.7269 & 0.6903 \\
\hline
\end{tabular}
\label{tab:PATE2}
\end{table}

\subsection{Comparison to \fedmd}
\label{app:sec:compFedMD}
We evaluate the performance of \fedct on a scenario with heterogeneous data proposed by \citet{li2019fedmd}. This scenario is not compatible with our assumptions, since it assumes a public labeled dataset and it assumes that for both the private and public dataset each instance has two labels: a fine-grained class label, and a more coarse superclass label. An example of such a dataset is CIFAR100, where the $100$ classes fall into $20$ superclasses. Data heterogeneity is achieved by homogeneously distributing superclasses over clients, but in such a way that each client only observes a single class per superclass. This means, while all clients observe vehicles, some only observe cars, others only bicycles. The classification task is to predict the superclass. This should ensure that a meaningful consensus can be achieved. So while \fedct is not designed for this scenario, we can apply it here by ignoring the labels on the public dataset and only using the superclass labels on private data. We compare \fedct to the method \citet{li2019fedmd} proposed for this scenario (\fedmd). Here, \fedct achieves a test accuracy of $0.51$, slightly outperforming \fedmd which achieves a test accuracy of $0.50$. We do not compare privacy, since \fedmd is algorithmically equivalent to \dd using a central server.

\subsection{Scalability:}
\label{app:scalability}

We compare the scalability  of \fedct in terms of the number of clients to \fedavg on FashionMNIST, using the same setup as before.  We increase the number of clients $m\in\{5,10,20,40,80\}$ and keep the overall training set size constant, so for larger numbers of clients the local training set size decreases. The results in Fig.~\ref{fig:scalability} show that higher levels of distribution reduce the accuracy slightly, but 
both \fedct and \fedavg show only a moderate decline, with \fedavg performing slightly better than \fedct. To ensure the positive scalability results are not an outlier particular to the FashionMNIST dataset, we also reported the test accuracy wrt. the number of clients on the Pneumonia dataset, as well. The results in Fig.~\ref{fig:scalability_pnumonia} show that also on this dataset \fedct scales as well as \fedavg with the number of clients. This indicates that \fedct indeed scales well with the number of clients.

\begin{figure}[t]
\begin{minipage}[t]{0.48\textwidth}
    \centering
    \includegraphics[width=\linewidth]{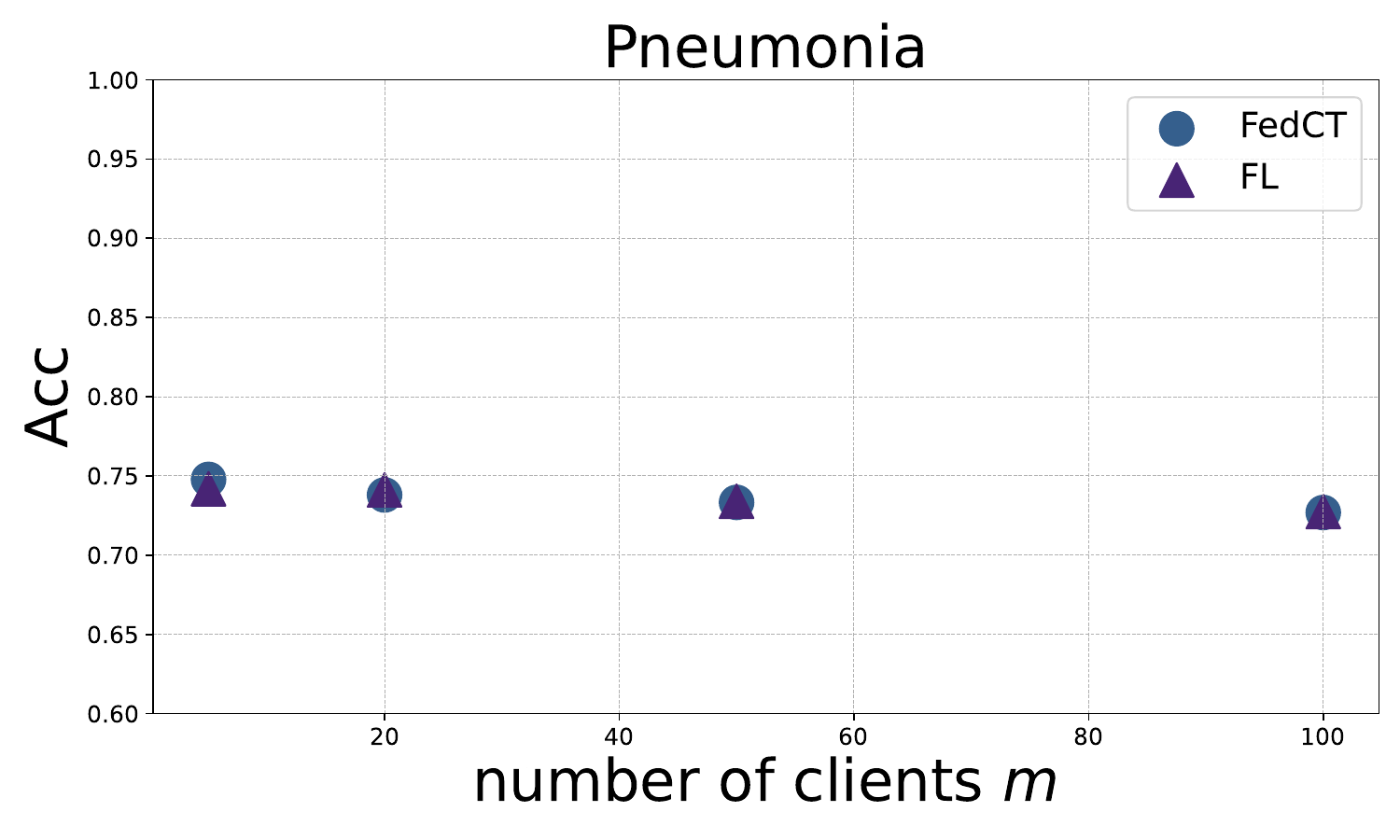}
    \caption{Test accuracy (ACC) of \fedct and \fedavg (FL) on Pneumonia with $|U|=200$ for various numbers of clients $m$.}
    \label{fig:scalability_pnumonia}
\end{minipage}\hfill
\begin{minipage}[t]{0.48\textwidth}
    \centering
    \includegraphics[width=\linewidth]{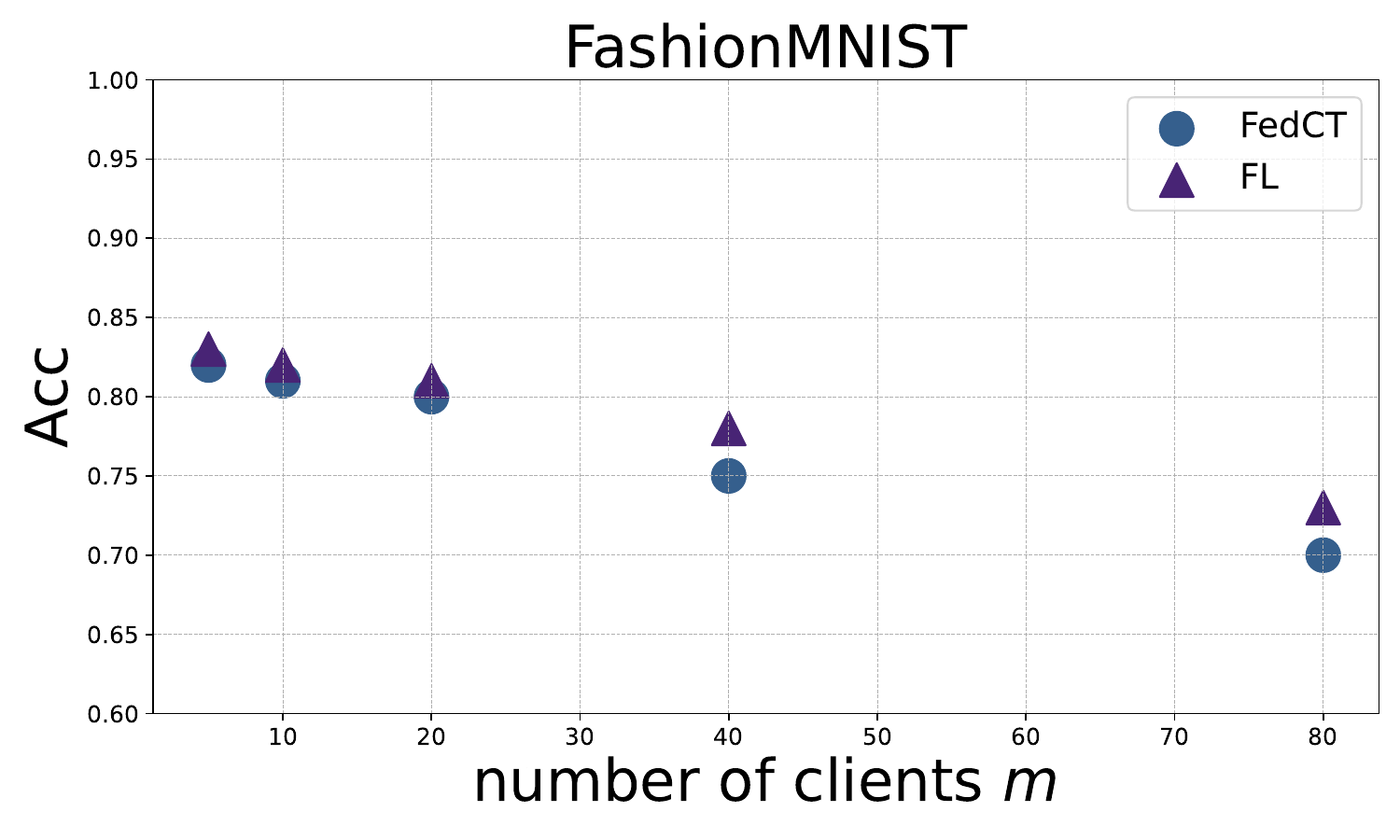}
    \caption{Test acc. (ACC) of \fedct ($|U|=5\cdot10^5$) and \fedavg on FashionMNIST vs. numbers of clients $m$.\vspace{-0.3cm}}
    \label{fig:scalability}
\end{minipage}
\end{figure}

\subsection{Detailed Analysis of Heterogeneous Data Distributions}
\label{app:deepnoniid}
We observed that using a 90$\%$ qualified majority vote in \fedct improves performance on pathological data distributions compared to a simple majority vote. To explain this improvement, we tracked the consensus label quality in each communication round. For that we use the fact that for our benchmark datasets, we have a ground truth label also for the unlabeled data, which is not available to \fedct. With this, we can assess the quality of consensus labels by measuring their accuracy against the ground truth on the unlabeled dataset. 

Note that using a qualified majority has two effects: It ensures that a consensus label is supported by a large quorum of clients, but it also reduces the size of the pseudo-labeled dataset, since examples for which there is no qualified majority are discarded in that communication round. 
Instead, in a majority vote the full unlabeled dataset is added to each client's local training data. 

In pathological scenarios ($\alpha_{1}= \alpha_{2} =0.01$), the majority tends to be incorrect due to the high heterogeneity in data distribution. Intuitively, that is reasonable, since clients only observe a small subset of labels, so for each example in $U$ the majority of clients has not observed its true label in their local training sets. In Fig.\ref{fig:app:noniid_QMaccuracyOfPseudolabels} we see that the quality of the consenssus labels for majority voting with $\alpha_1=\alpha_2=0.01$ is poor, which explains the suboptimal performance of \fedct in this case. Using a $90\%$ qualified majority, however, substantially improves the quality of pseudo-labels, resulting in the better performance observed for $\fedct(QM)$. In Fig.~\ref{fig:app:noniid_QMnumberOfExamplesTaken} we see that the higher quality in consensus labels comes at the price of fewer unlabeled examples being available to clients, since most examples are discarded. Clients do improve through these few, high-quality examples, though: after a few communication rounds a qualified quorum is found for almost all unlabeled examples.

For medium heterogeneity ($\alpha_{1}=100, \alpha_{2}=0.01$), using a qualified majority does not improve performance over majority voting. Fig.~\ref{fig:app:noniid_QMaccuracyOfPseudolabels} shows that in this case the quality of the qualified majority is similar to that of normal majority voting, yet the number of examples with a consensus label is substantially smaller when using a qualified majority, as seen in Fig.~\ref{fig:app:noniid_QMnumberOfExamplesTaken}. In this case, the disadvantage of QM taking fewer examples is not outweighed by the quality improvement on the labels. 

We conclude that using a qualified majority vote as consensus mechanism successfully mitigates the challenges of \fedct on pathological non-iid datasets due to the improved quality of pseudo-labels. It would make for excellent future work to investigate how the consensus mechanism can be further improved, e.g., by incorporating the confidence of local models on their predictions.

\begin{figure}[t]
\begin{minipage}[t]{0.48\textwidth}
    \centering
    \includegraphics[width=\linewidth]{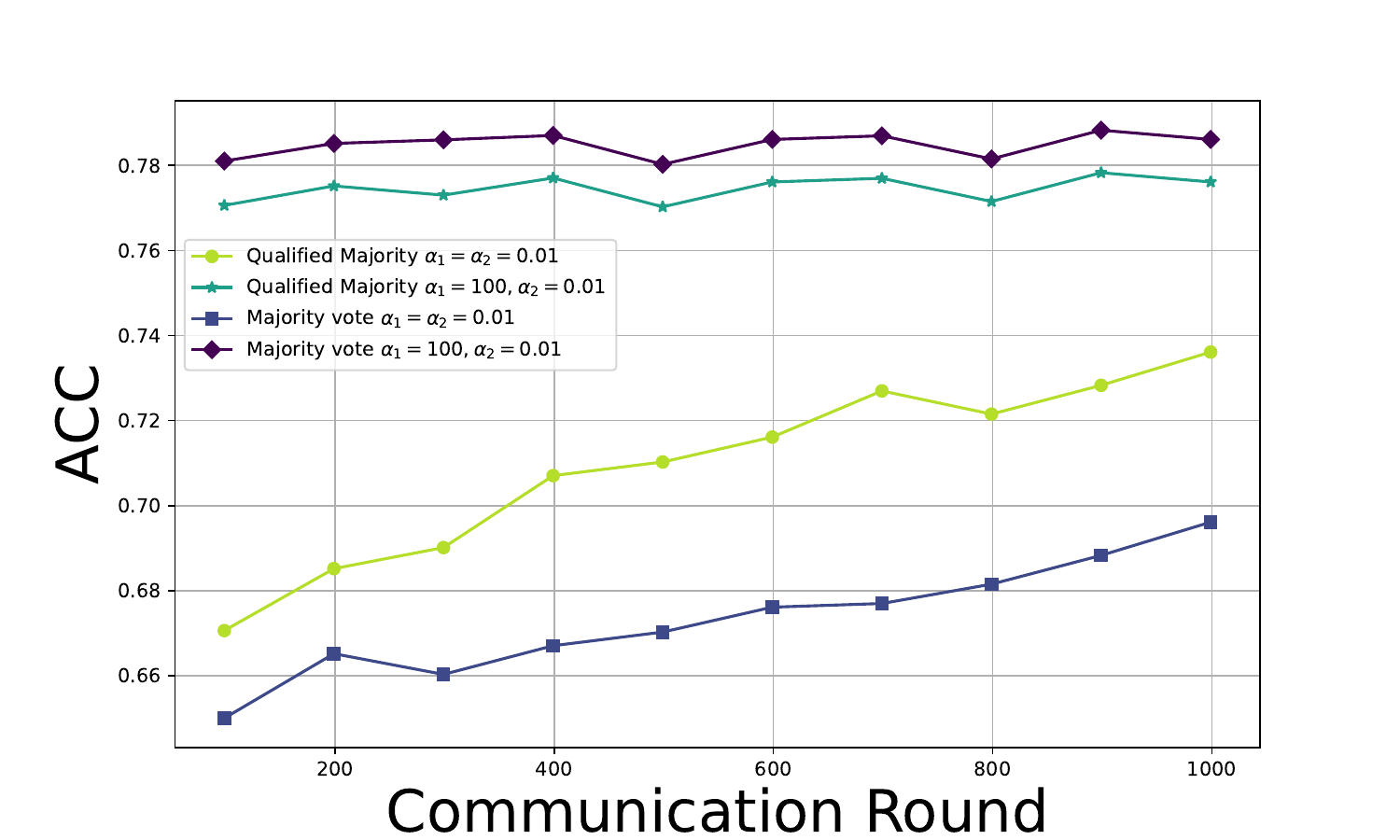}
    \caption{The ground-truth accuracy ACC of the consensus labels in every communication round.}
    \label{fig:app:noniid_QMnumberOfExamplesTaken}
\end{minipage}\hfill
\begin{minipage}[t]{0.48\textwidth}
    \centering
    \includegraphics[width=\linewidth]{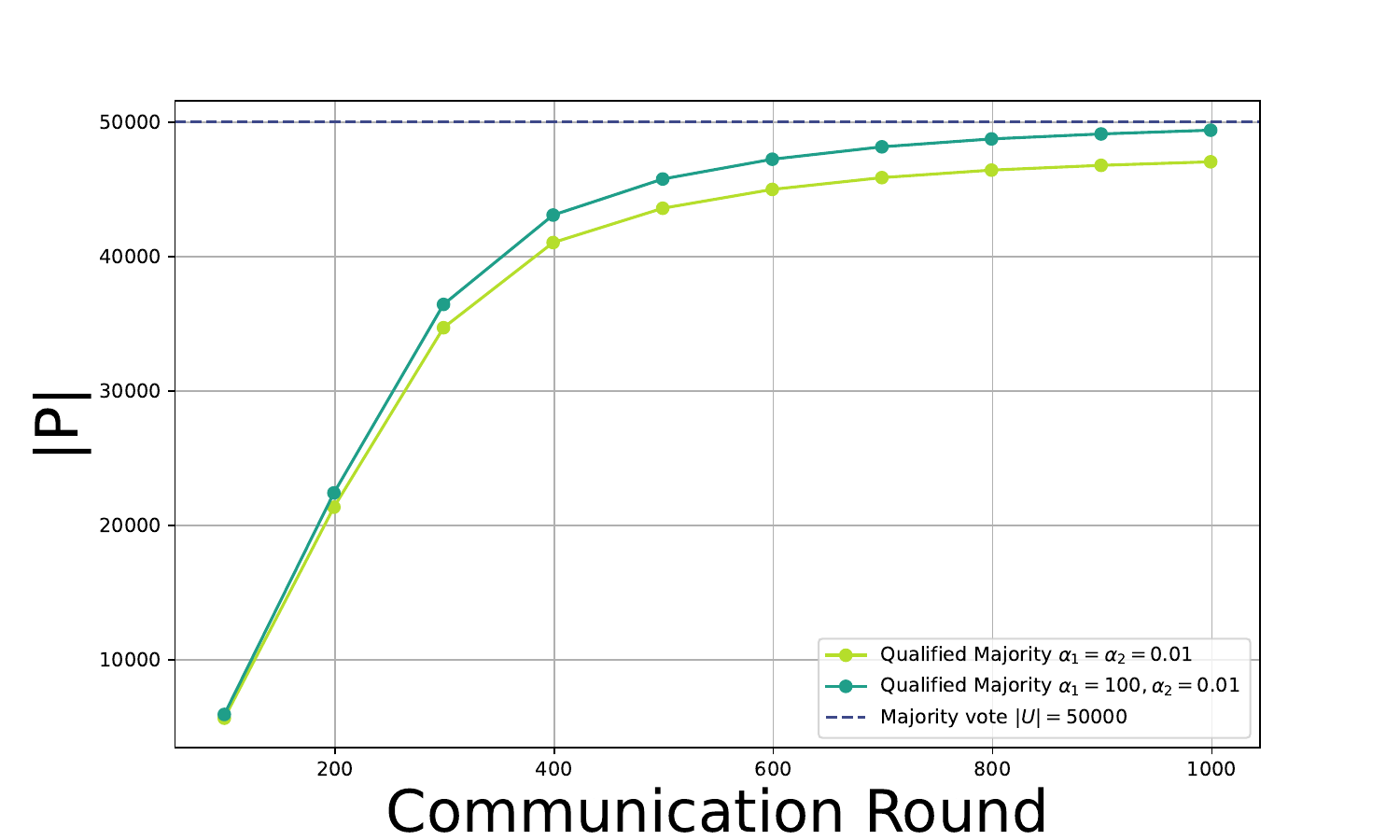}
    \caption{The size of pseudo-labeled public dataset used in every communication round.}
    \label{fig:app:noniid_QMaccuracyOfPseudolabels}
\end{minipage}
\end{figure}

\begin{table}
\centering
\caption{CIFAR100 classes that are semantically more similar to the CIFAR10 classes.}
\small
\begin{tabular}{c|c}
\hline 
\textbf{CIFAR10 class} & \textbf{CIFAR100 classes}\\
\hline
airplane & rocket \\
automobile & bus, streetcar, tank, lawn\_mower \\
bird & bee, butterfly \\
cat & tiger, lion, leopard \\
deer & cattle \\
dog & wolf, fox \\
frog & lizard \\
horse & camel \\
ship & sea \\
truck & pickup\_truck \\
\hline
\end{tabular}
\label{table:classmappingC100C10}
\end{table}

\subsection{Abblation Study}
\label{app:abblation:study}
We want to investigate the effect of co-training on the performance. For that, we first analyze the effect of sharing predictions on the unlabeled dataset by evaluating the performance of \fedct on varying sizes of $U$. We then investigate the effect of collaboration by comparing the performance of \fedct and \fedavg to local training without communication.

\paragraph{Effect of the Size of the Unlabeled Dataset:}
\label{app:effect:unlabeled}

Since \fedct utilizes a public unlabeled dataset, a natural question is how large this unlabeled dataset needs to be. To answer this question, we evaluate the performance of FedCT for varying sizes $|U|$ of the unlabeled dataset. We evaluate \fedct on the Pneumonia dataset where we fixed the local training data set size to 100 examples. Our results in Fig.~\ref{app:fig:unlabeled:size} show that with $|U|=100$ examples we reach high model quality, comparable to \fedavg, and that further increasing $|U|$ only slightly improves model quality. Using too little unlabeled examples, however, does negatively impact \fedct. This indicates, that \fedct indeed utilizes the unlabeled dataset. To further investigate this, we compare \fedct also to training local models without communication and pooling all data for centralized training. Indeed, \fedct even with a small unlabeled dataset outperforms local training without communication. Moreover, with $|U|=1000$ \fedct nearly reaches the model quality of centralized training.

\begin{figure}[H]
    \centering
    \includegraphics[width=0.55\linewidth]{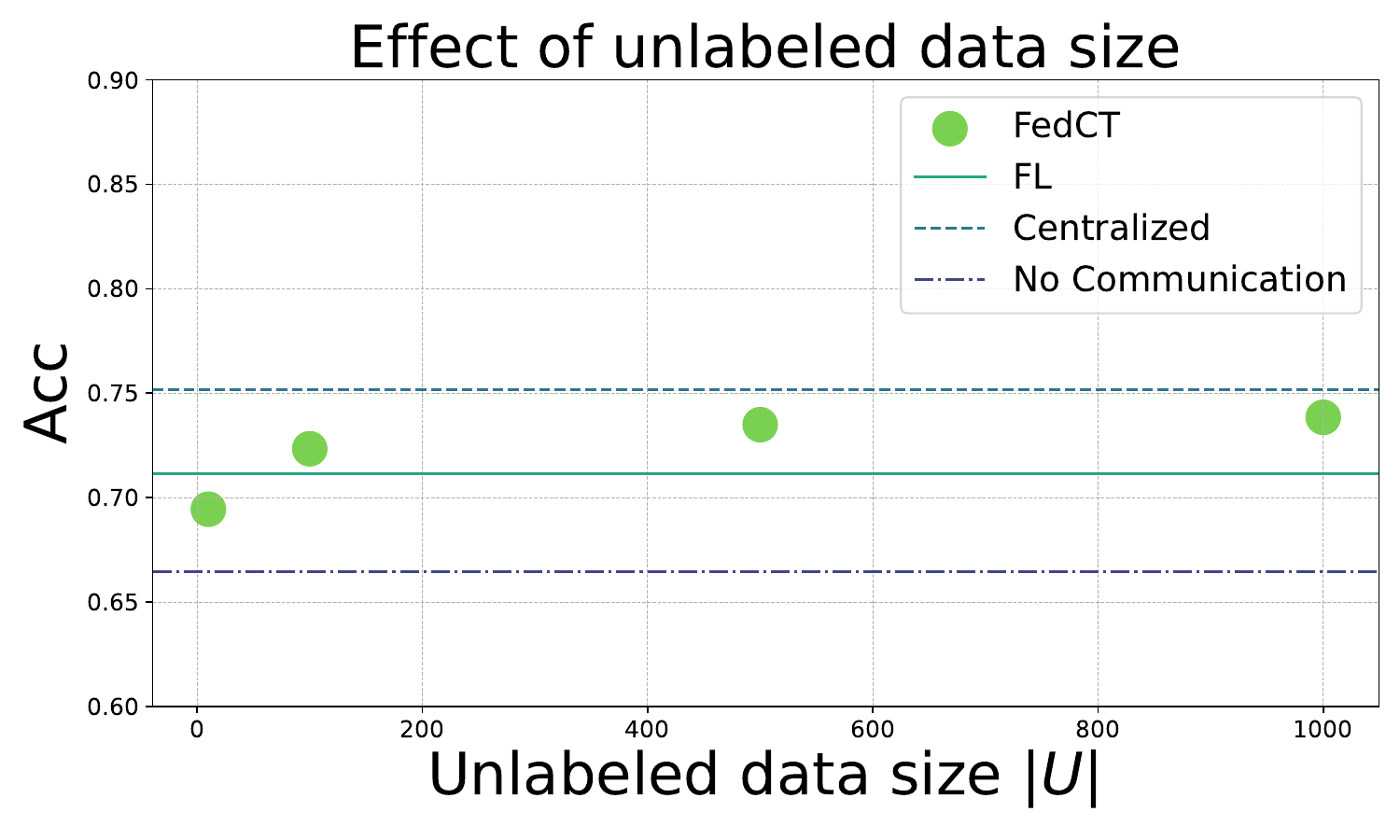}
    \caption{Test accuracy (ACC) of \fedct under different unlabeled dataset size $U$.}
    \label{app:fig:unlabeled:size}
    \vspace{-0.3cm}
\end{figure}

\paragraph{Effect of Collaboration:}
\label{app:noncomstudy}
Our experiments so far show that the collaboration via co-training is as effective as the collaboration via model averaging. A natural question then is: is collabortion necessary at all? In particular, we have argued that the good performance of \dpfedct with a lot of noise on the communicated local predictions is due to the robustness of majority voting. Another explanation would be, however, that the performance of both \fedct and \fedavg do not depend on collaboration at all. Instead, local training is sufficient to achieve the reported performances. To ensure that collaboration is beneficial, we compare \fedct and \fedavg to only local training without any communication. We use the same setup as in our main experiment from Tab.~\ref{table:iidexp} and report the average test accuracy of local models over three runs in Tab.~\ref{table:nocom}. These results show that collaboration substantially improves test accuracy over no-communication---on average by $0.11$. Therefore, collaboration is indeed beneficial. This supports our hypothesis that the good performance of \dpfedct even under strong DP noise is due to the robustness of majority voting to noise, as \citet{papernot2016semi} suggested as well.


\begin{table*}[ht]
\centering
\caption{Test accuracy (ACC)  for $m=5$ clients on iid data.}
\begin{adjustbox}{width=0.6\textwidth}
\small
\begin{tabular}{c|c|c|c}
\hline 
\textbf{Dataset} &\textbf{no-comunication} &\textbf{\fedct}& \textbf{\fedavg} \\
\hline
CIFAR10 & $0.73 \pm 0.003$&$0.77\pm0.003$&$0.77\pm0.020$\\
FashionMNIST &$0.79\pm 0.005$ &$0.84\pm 0.004$&$0.83\pm0.024$\\
Pneumonia & $0.68 \pm 0.004$ &$0.78 \pm 0.008$&$0.74 \pm 0.013$\\
MRI & $0.42 \pm 0.001$ &$0.63 \pm 0.004$&$0.66 \pm 0.015$\\
SVHN &$0.76 \pm 0.002$ &$0.91 \pm 0.002$&$0.91 \pm 0.026$\\
\hline
\end{tabular}
\end{adjustbox}
\label{table:nocom}
\end{table*}

\paragraph{\fedavg with access to public Unlabeled dataset:} 
In \fedavg,\dpfl, and \fedmd, the clients use the same private dataset as \fedct,\pate, and \dd. However, \fedct,\pate and \dd also leverage access to a public unlabeled dataset. This setup reflects a realistic scenario in many real-world applications. For instance, in healthcare, public health databases often contain relevant patient information but lack labels for specific diseases ( e.g., The TCGA database provides labels for only 33 out of more than 200 known cancer types). A natural question arises: How much could \fedavg improve if clients had the resources to manually label portions of the public unlabeled dataset? To explore this, we simulated such a scenario by distributing the unlabeled dataset in CIFAR10 ($|U|=10k$) with correct labels to local clients in \fedavg. With this additional labeled data, \fedavg's performance improved slightly, from $0.77\pm 0.02$ to $0.78\pm 0.03$. However, as with many semi-supervised approaches, the model performance gain often comes at the prohibitive cost of manually labeling $10k$ examples. This experiment demonstrates that even if clients could label the public dataset, our approach achieves comparable performance while offering substantially better privacy.

\subsection{A Note on the Byzantine Resilience of \fedct}
\label{app:mix:Byzantine-resilience}

\citet{jiang2020federated} argue that federated semi-supervised learning with soft label sharing is more Byzantine robust than \fedavg. On the example of \feddistill they argue that the thread vector is bounded on the probability simplex over classes---unlike \fedavg, where the thread vector is unbounded---which makes soft-label sharing more resilient against Byzantine attacks. Following this logic, we argue that the thread vector in \fedct is even more bounded (to a binary vector over classes) and therefore we expect \fedct to be at least as Byzantine resilient as \feddistill. Investigating the Byzantine resilience of hard label sharing further, in particular in combination with qualified majority voting, is an interesting direction for future work.

\section{Details on Experiments}
\label{app:exp:details}

\subsection{Details on Privacy Vulnerability Experiments}
\label{app:privacy:details}
We measure privacy vulnerability by performing membership inference attacks against \aimhi, \fedavg, \dd, and \pate. 
In all attacks, the attacker creates an attack model using a model it constructs from its training and test datasets. Similar to previous work~\cite{shokri2017membership}, we assume that the training data of the attacker has a similar distribution to the training data of the client. Once the attacker has its attack model, it uses this model for membership inference using ShadowMetric developed by \cite{shokri2017membership}. 
In blackbox attacks (in which the attacker does not have access to intermediate model parameters), it only uses the classification scores it receives from the target model (i.e., client's model) for membership inference. 
On the other hand, in whitebox attacks (in which the attacker can observe the intermediate model parameters), it can use additional information in its attack model. 
Since the proposed \aimhi,\dd and \pate do not reveal intermediate model parameters to any party, it is only subject to blackbox attacks. Vanilla federated learning on the other hand is subject to whitebox attacks. 
Each inference attack produces a membership score of a queried data point, indicating the likelihood of the data point being a member of the training set. We measure the success of membership inference as ROC AUC of these scores. 
The \textbf{vulnerability (VUL)} of a method is the ROC AUC of membership attacks over $K$ runs over the entire training set (also called attack epochs) according to the attack model and scenario. A vulnerability of $1.0$ means that membership can be inferred with certainty, whereas $0.5$ means that deciding on membership is a random guess.

We assume the following attack model: clients are honest and the server may be semi-honest (follow the protocol execution correctly, but it may try to infer sensitive information about the clients). The main goal of a semi-honest server is to infer sensitive information about the local training data of the clients. This is a stronger attacker assumption compared to a semi-honest client since the server receives the most amount of information from the clients during the protocol, and a potential semi-honest client can only obtain indirect information about the other clients. We also assume that parties do not collude.

The attack scenario for \aimhi, \dd and \pate is that the attacker can send a (forged) unlabeled dataset to the clients and observe their predictions, equivalent to one attack epoch ($K=1$); the one for \fedavg and \dpfl is that the attacker receives model parameters and can run an arbitrary number of attacks---we use $K=500$ attack epochs.

\subsection{Datasets}
\begin{table}[b]
\centering
\caption{Dataset descriptions for image classification experiments.}
\begin{adjustbox}{width=1\textwidth}
\small
\begin{tabular}{|c|c|c|c|c|c|c}
\hline 
\multicolumn{1}{|c|}{Dataset}& training size& testing size & unlabeled size $|U|$& communication period $b$ & number of rounds $T$  \\
\hline
 CIFAR10 & $40\cdot 10^3$&$10\cdot 10^3$ &$10\cdot 10^3$ &$10$ & $3\cdot 10^3$ \\
 FashionMNIST& $10\cdot 10^3$&$10\cdot 10^3$ & $50\cdot 10^3$& $50$& $20\cdot 10^3$ \\
 Pneumonia & $4386$& $624$&$900$ & $20$& $20\cdot 10^3$\\
 MRI&$30$&$53$ &$170$ & $6$&$2\cdot 10^3$ \\
 SVHN & $38\,257$&$26\,032$ & $35\cdot 10^3$& $10$& $20\cdot 10^3$  \\
 IMDB & $1450$&$150$ & $150$& $5$& $100$  \\
\hline
\end{tabular}
\end{adjustbox}
\vspace{0.1cm}
\label{tbl:expsetup}
\end{table}
%
We use $3$ standard image classification datasets: CIFAR10~\citep{krizhevsky2010cifar}, FashionMNIST~\citep{xiao2017/online}, IMDB \cite{maas-EtAl:2011:ACL-HLT2011} and SVHN~\citep{netzer2011reading}. We describe the datasets and our preprocessing briefly.

\textit{CIFAR10} consists of $50\,000$ training and $10\,000$ test $32\times 32$ color images in $10$ classes with equal distribution (i.e., a total of $6\,000$ images per class). Images are normalized to zero mean and unit variance.
\textit{FashionMNIST} consists of $60\,000$ training and $10\,000$ test $28\times 28$ grayscale images of clothing items in $10$ classes with equal distribution. Images are not normalized.
\textit{SVHN} (Street View House Numbers) consists of $630\,420$ $32\times 32$ color images of digits from house numbers in Google Street View, i.e., $10$ classes. The datasest is partitioned into $73\,257$ for training, $26\,032$ for testing, and $531\,131$ additional training images. In our experiments, we use only the training and testing set. Images are not normalized.

\textit{IMBD} sentimental dataset consists of a large collection of movie reviews along with corresponding sentiment labels indicating whether the sentiment expressed in each review is positive or negative. A total of $1750$ examples have been used. $1450$ examples as a training set, $150$ examples as a testing set and $150$ examples as unlabeled dataset.

We use five standard datasets from the UCI Machine Learning repository for our experiments on collaboratively training interpretable models: WineQuality~\citep{cortez2009modeling}, BreastCancer~\citep{sudlow2015uk}, AdultsIncome~\citep{misc_adult_2}, Mushroom~\citep{misc_mushroom_73}, and Covertype~\citep{misc_covertype_31}. A short description of the five datasets follows. 
\textit{WineQuality} is a tabular dataset of $6\,497$ instances of wine with $11$ features describing the wine (e.g., alcohol content, acidity, pH, and sulfur dioxide levels) and the label is a wine quality score from $0$ to $10$. We remove duplicate rows and transform the categorial type attribute to a numerical value. We then normalize all features to zero mean and unit variance.
\textit{BreastCancer} is a medical diagnostics tabular dataset with $569$ instances of breast cell samples with $30$ features describing cell nuclei with $2$ classes (malignant and benign). We followed the same preprocessing steps as WineQuality dataset.
\textit{AdultIncome} is a tabular dataset with  $48,842$ instances of adults from various backgrounds with $14$ features describing attributes such as age, work class, education, marital status, occupation, relationship, race, gender, etc. The dataset is used to predict whether an individual earns more than $50,000\$$ a year, leading to two classes: income more than $50,000\$$, and income less than or equal to $50,000\$$.
\textit{Mushroom} is a biological tabular dataset with  $8124$ instances of mushroom samples with $22$ features describing physical characteristics such as cap shape, cap surface, cap color, bruises, odor, gill attachment, etc. The dataset is used to classify mushrooms as edible or poisonous, leading to two classes: edible and poisonous.
\textit{Covertype} is an environmental tabular dataset with $581,012$ instances of forested areas with $54$ features describing geographical and cartographical variables, such as elevation, aspect, slope, horizontal distance to hydrology, vertical distance to hydrology, horizontal distance to roadways, hillshade indices, and wilderness areas and soil type binary indicators. The dataset is used to predict forest cover type, leading to  $7$ distinct classes: Spruce/Fir, Lodgepole Pine, Ponderosa Pine, Cottonwood/Willow, Aspen, Douglas-fir, and Krummholz.

Furthermore, we use $2$ medical image classification datasets, Pneumonia~\citep{kermany2018identifying}, and MRI\footnote{\url{https://www.kaggle.com/datasets/navoneel/brain-mri-images-for-brain-tumor-detection}}. 
\textit{Pneumonia} consists of $5\,286$ training and $624$ test chest x-rays with labels \textit{normal}, \textit{viral pneumonia}, and \textit{bacterial pneumonia}. We simplify the labels to \textit{healthy} and \textit{pneumonia} with a class imbalance of roughly $3$ pneumonia to $1$ healthy. The original images in the Pneumonia dataset do not have a fixed resolution as they are sourced from various clinical settings and different acquisition devices. We resize all images to a resolution of $224\times 224$ pixels without normalization.
\textit{MRI} consists of $253$ MRI brain scans with a class imbalance of approximately $1.5$ brain tumor scans to $1$ healthy scan. Out of the total $253$ images, we use $53$ images as testing set. Similar to the pneumonia dataset, the original images have no fixed resolution and are thus resized to $150\times 150$ without normalization.

\subsection{Experimental Setup}
\label{app:exp:setup}
%
\begin{table}[t]
\centering
\caption{CIFAR10 architecture}
\begin{tabular}{|c|c|c|c|}
\hline
\textbf{Layer} & \textbf{Output Shape} & \textbf{Activation} & \textbf{Parameters} \\
\hline
Conv2D & (32, 32, 32) & ReLU & 896 \\
BatchNormalization & (32, 32, 32) & - & 128 \\
Conv2D & (32, 32, 32) & ReLU & 9248 \\
BatchNormalization & (32, 32, 32) & - & 128 \\
MaxPooling2D & (16, 16, 32) & - & - \\
Dropout & (16, 16, 32) & - & - \\
Conv2D & (16, 16, 64) & ReLU & 18496 \\
BatchNormalization & (16, 16, 64) & - & 256 \\
Conv2D & (16, 16, 64) & ReLU & 36928 \\
BatchNormalization & (16, 16, 64) & - & 256 \\
MaxPooling2D & (8, 8, 64) & - & - \\
Dropout & (8, 8, 64) & - & - \\
Conv2D & (8, 8, 128) & ReLU & 73856 \\
BatchNormalization & (8, 8, 128) & - & 512 \\
Conv2D & (8, 8, 128) & ReLU & 147584 \\
BatchNormalization & (8, 8, 128) & - & 512 \\
MaxPooling2D & (4, 4, 128) & - & - \\
Dropout & (4, 4, 128) & - & - \\
Flatten & (2048,) & - & - \\
Dense & (128,) & ReLU & 262272 \\
BatchNormalization & (128,) & - & 512 \\
Dropout & (128,) & - & - \\
Dense & (10,) & Linear & 1290 \\
\hline
\end{tabular}
\label{tbl:CIFAR10arch}
\end{table}
We now describe the details of the experimental setup used in our empirical evaluation. All experiments were conducted on a cluster node with $6$ NVIDIA RTX A6000 with $48$GB VRAM, an AMD EPYC 7402 24-Core Processor and $1$TB RAM.

In our privacy-utility trade-off experiments, we use $m=5$ clients for all datasets. We report the split into training, test, and unlabeled dataset per dataset, as well as the used communication period $b$ and number of rounds $T$ in Tab.~\ref{tbl:expsetup}.
For the scalability experiments, we use the same setup, varying $m\in\{5,10,20,40,80\}$ clients.
For the experiments on heterogeneous data distributions, we use the same setup as for the privacy-utility trade-off, but we sample the local dataset from a Dirichlet distribution as described in the main text.


For all experiments, we use Adam as an optimization algorithm with a learning rate $0.01$ for CIFAR10, and $0.001$ for the remaining datasets. A description of the DNN architecture for each dataset follows.

The neural network architectures used for each dataset are given in the following. 
For CIFAR10 we use a CNN with multiple convolutional layers with batch normalization and max pooling. The details of the architecture are described in Tab.~\ref{tbl:CIFAR10arch}.
For FashionMNIST, we use a simple feed forward architecture on the flattened input. The details of the architecture are described in Tab.~\ref{tbl:FashionMNISTarch}.
\begin{table}[t]
\centering
\caption{FashionMNIST architecture}
\begin{tabular}{|c|c|c|c|}
\hline
\textbf{Layer} & \textbf{Output Shape} & \textbf{Activation} & \textbf{Parameters} \\
\hline
Flatten & (784,) & - & - \\
Linear & (784, 512) & - & 401,920 \\
ReLU & (512,) & ReLU & - \\
Linear & (512, 512) & - & 262,656 \\
ReLU & (512,) & ReLU & - \\
Linear & (512, 10) & - & 5,130 \\
\hline
\end{tabular}
\label{tbl:FashionMNISTarch}
\end{table}
For Pneumonia, we use a simple CNN, again with batch normalization and max pooling, with details given in Tab.~\ref{tbl:Pneumoniaarch}.
\begin{table}[ht]
\centering
\caption{Pneumonia architecture}
\begin{tabular}{|c|c|c|c|}
\hline
\textbf{Layer} & \textbf{Output Shape} & \textbf{Activation} & \textbf{Parameters} \\
\hline
Conv2d & (3, 32, 32) & - & 896 \\
BatchNorm2d & (32, 32, 32) & - & 64 \\
Conv2d & (32, 32, 32) & - & 18,464 \\
BatchNorm2d & (64, 32, 32) & - & 128 \\
MaxPool2d & (64, 16, 16) & - & - \\
Conv2d & (64, 16, 16) & - & 36,928 \\
BatchNorm2d & (64, 16, 16) & - & 128 \\
MaxPool2d & (64, 8, 8) & - & - \\
Flatten & (4096,) & - & - \\
Linear & (2,) & - & 4,194,306 \\
\hline
\end{tabular}
\label{tbl:Pneumoniaarch}
\end{table}
For MRI we use an architecture similar to pneumonia with details described in Tab.~\ref{tbl:MRIarch}.
\begin{table}[ht]
\centering
\caption{MRI architecture}
\begin{tabular}{|c|c|c|c|}
\hline
\textbf{Layer} & \textbf{Output Shape} & \textbf{Activation} & \textbf{Parameters} \\
\hline
Conv2d & (3, 32, 32) & - & 896 \\
BatchNorm2d & (32, 32, 32) & - & 64 \\
Conv2d & (32, 32, 32) & - & 18,464 \\
BatchNorm2d & (64, 32, 32) & - & 128 \\
MaxPool2d & (64, 16, 16) & - & - \\
Conv2d & (64, 16, 16) & - & 36,928 \\
BatchNorm2d & (64, 16, 16) & - & 128 \\
MaxPool2d & (64, 8, 8) & - & - \\
Flatten & (32768,) & - & - \\
Linear & (2,) & - & 2,636,034 \\
\hline
\end{tabular}
\label{tbl:MRIarch}
\end{table}
For SVHN, we use again a standard CNN with batch normalization and max pooling, detailed in Tab.~\ref{tbl:SVHNarch}.
\begin{table}[ht]
\centering
\caption{SVHN architecture}
\begin{tabular}{|c|c|c|}
\hline
\textbf{Layer} & \textbf{Output Shape} & \textbf{Parameters} \\
\hline
Conv2d & (3, 32, 32) & 896 \\
BatchNorm2d & (32, 32, 32) & 64 \\
Conv2d & (32, 32, 32) & 9,248 \\
MaxPool2d & (32, 16, 16) & - \\
Dropout2d & (32, 16, 16) & - \\
Conv2d & (32, 16, 16) & 18,464 \\
BatchNorm2d & (64, 16, 16) & 128 \\
Conv2d & (64, 16, 16) & 36,928 \\
MaxPool2d & (64, 8, 8) & - \\
Dropout2d & (64, 8, 8) & - \\
Conv2d & (64, 8, 8) & 73,856 \\
BatchNorm2d & (128, 8, 8) & 256 \\
Conv2d & (128, 8, 8) & 147,584 \\
MaxPool2d & (128, 4, 4) & - \\
Dropout2d & (128, 4, 4) & - \\
Flatten & (2048,) & - \\
Linear & (128,) & 262,272 \\
Dropout & (128,) & - \\
Linear & (10,) & 1,290 \\
\hline
\end{tabular}
\label{tbl:SVHNarch}
\end{table}

\begin{table}[ht]
\centering
\caption{GPT2ForSequenceClassification Model Architecture}
\begin{tabular}{|c|c|}
\hline
\textbf{Layer} & \textbf{Paramters} \\
\hline
Embedding: 2-1 & 38,597,376 \\
Embedding: 2-2 & 786,432 \\
Dropout: 2-3 & -- \\
ModuleList: 2-4 & -- \\
GPT2Block: 3-1 to 3-12 & 7,087,872 (each) \\
LayerNorm: 2-5 & 1,536 \\
\hline
Linear: 1-2 & 1,536 \\
\hline
\textbf{Total params} & \textbf{124,441,344} \\
\hline
\end{tabular}
\label{tbl:GPT2arch}
\end{table}

For our experiments on interpretable models, we use $m= 5$ clients. For decision trees (DT), we split by the Gini index with at least $2$ samples for splitting. For RuleFit, we use a tree size of $4$ and a maximum number of rules of $200$. For the WineQuality dataset, we use an unlabeled dataset size of $|U|=4100$, a training set size of $136$, and a test set size of $1059$. For BreastCancer, we use an unlabeled dataset of size $|U|=370$, a training set of size $85$, and a test set of size $114$. For the AdultsIncome dataset, we use an unlabeled dataset of size $|U|=10^4$, a training set of size $31,073$, and a test set of size $7769$. For the Mushroom dataset, we use an unlabeled dataset of size $|U|=4000$, a training set of size $2499$, and a test set of size $1625$. For the covertype dataset, we use an unlabeled dataset of size $|U|=5\cdot10^4$, a training set of size $414,810$, and a test set of size $116,202$.

\section{Practical Impact of Federated Co-Training}
\label{sec:Impact}
This paper tackles the problem of privacy in collaborative and federated learning. While federated learning allows us to collaboratively train models without sharing sensitive data directly, attackers can still make non-trivial inference about sensitive data from the shared information, such as model parameters, model updates, or gradients. 

This is not only a hypothetical issue. The example of healthcare shows the necessity of collaborative training, but also the importance of ensuring privacy:
Despite isolated successes for ML in healthcare for early detection to improve outcomes and reduce costs~\citep{Keen2018,Fenton2013,Rao2007}, widespread adoption of ML to support clinical decisions remains distant.
A critical barrier is access to large and diverse patient data.
Modern ML techniques, if given access to millions of records collected in routine care from multiple institutions,
are capable of learning robust, high-performing models for disease that
identify novel markers of risk, predict disease to help clinicians intervene earlier, model disease progression and even suggest preventative interventions for individual patients~\citep{Chittora2021,Khan2020,Qin2020,Sree2020}.
Unfortunately, privacy (HIPAA) and other regulatory
and reputational concerns prevent medical institutions from sharing patient data to create large representative patient datasets for training~\citep{HIPAAOCR2012,GDPR2016,Seh2020,HIPAAJournal2019}. Federated learning improves privacy over data pooling. Its privacy vulnerability is considered a major challenge in healthcare, though~\citep{rieke2020future}, that prevents its widespread application. Our research can help resolving the privacy issues in collaborative training and thereby unlock machine learning on large, distributed, yet highly sensitive datasets, as in healthcare.
\vfill

\end{document}